\newtheorem{theorem}{Theorem}
\newtheorem{lemma}[theorem]{Lemma}
\newcommand{\oea}{\mbox{${(1 + 1)}$~EA}\xspace}
\newcommand{\ooea}{\oea}
\newcommand{\mpoea}{\mbox{${(\mu+1)}$~EA}\xspace}
\newcommand{\moga}{\mbox{${(\mu+1)}$~GA}\xspace}
\newcommand{\opllga}{\mbox{${(1+(\lambda,\lambda))}$~GA}\xspace}
\newcommand{\ollga}{\opllga}
\newcommand{\mga}{\mbox{${(\mu+1)}$~GA}\xspace}
\newcommand*{\yIndividual}{$y$-individual\xspace}
\newcommand*{\yIndividuals}{$y$-individuals\xspace}
\newcommand*{\nonyIndividual}{non-$y$-individual\xspace}
\newcommand*{\nonyIndividuals}{non-$y$-individuals\xspace}
\newcommand{\onemax}{\textsc{OneMax}\xspace}
\newcommand{\jump}{\textsc{Jump}\xspace}
\newcommand{\jumpk}{\textsc{Jump}_k}
\newcommand{\R}{\ensuremath{\mathbb{R}}}
\newcommand{\N}{\ensuremath{\mathbb{N}}} 
\newcommand{\bbone}{{\mathds{1}}}
\newcommand{\calF}{\ensuremath{\mathcal{F}}}
\DeclareMathOperator{\E}{\mathds{E}}
\newcommand{\eps}{\varepsilon}
\let\originalleft\left
\let\originalright\right
\renewcommand{\left}{\mathopen{}\mathclose\bgroup\originalleft}
\renewcommand{\right}{\aftergroup\egroup\originalright}
\title{Lasting Diversity and Superior Runtime Guarantees for the $(\mu+1)$ Genetic Algorithm}
\author{Benjamin Doerr$^{1}$ \and Aymen Echarghaoui$^{2}$ \and Mohammed Jamal$^{2}$ \and Martin~S. Krejca$^{1}$}
\date
{
        \small $^{1}$Laboratoire d'Informatique (LIX), CNRS, École Polytechnique,\\
        \small Institut Polytechnique de Paris, Palaiseau, France\\
        \small $^{2}$École Polytechnique, Institut Polytechnique de Paris, Palaiseau, France
}
\begin{document}

\maketitle

\sloppy{
\begin{abstract}
  Most evolutionary algorithms (EAs) used in practice employ crossover. In contrast, only for few and mostly artificial examples a runtime advantage from crossover could be proven with mathematical means. The most convincing such result shows that the $(\mu+1)$ genetic algorithm~(GA) with population size $\mu=O(n)$
  optimizes jump functions with gap size $k \ge 3$ in time $O(n^k / \mu + n^{k-1}\log n)$, beating the  $\Theta(n^k)$ runtime of many mutation-based EAs.
  This result builds on a proof that the GA occasionally and then for an expected number of $\Omega(\mu^2)$ iterations has a population that is not dominated by a single genotype.

  In this work, we show that this diversity persist with high probability for a time exponential in $\mu$ (instead of quadratic). From this better understanding of the population diversity, we obtain stronger runtime guarantees, among them the statement that for all $c\ln(n)\le\mu \le  n/\log n$, with~$c$ a suitable constant, the runtime of the $(\mu+1)$~GA on $\jump_k$, with $k \ge 3$, is $O(n^{k-1})$. Consequently, already with logarithmic population sizes, the GA gains a speed-up of order $\Omega(n)$ from crossover.
\end{abstract}

\section{Introduction}

In the vast majority of applications of evolutionary algorithms (EAs), crossover is employed. This is natural since EAs are inspired by natural evolution. However, since the ultimate goal of EAs is not to mimic evolution but to compute good solutions for a given problem, one could wonder if there is a real justification for the use of crossover, or, asking the question in a less absolute manner, the questions are for which problems crossover can speed up evolutionary algorithms, what are the reasons for the possible performance gains through crossover, and what is the best way to employ crossover (parameters, crossover operators, etc.).

With their fundamental nature, these are all question one could try to answer via theoretical means, e.g., via mathematical runtime analyses~\cite{AugerD11,DoerrN20,Jansen13,NeumannW10}, which have given many important insights and explanations in the past. Unfortunately, when it comes to understanding crossover, the mathematical runtime analysis area was not very successful (but we note that other attempts to understand crossover, e.g., the building block hypothesis~\cite{Holland75} also struggled to explain crossover~\cite{MitchellHF93}).

The main difficulty towards understanding crossover via mathematical tools are the usually complicated population dynamics. We note that even when only regarding mutation-based algorithms, still the majority of runtime analyses regards algorithms with trivial populations such as \emph{randomized local search} or the \oea~\cite{DoerrN20}. The particular difficulty when adding crossover is that we need to understand the diversity of the population. Crossover can only be really effective when sufficiently different, good solutions are available. That this is the core problem is easily visible from the existing runtime analyses of crossover-based algorithms. We refer to Section~\ref{sec:previous} for a more detailed discussion and give only few examples here. That diversity, more precisely, arriving at a state with a diverse population, is the crucial ingredient for profiting from crossover was already discussed very clearly in the first runtime analysis of a crossover-based algorithm~\cite{JansenW02}, which analyzes how the $(\mu+1)$ genetic algorithm (GA) optimizes the \jump benchmark. Since crossover can also contribute towards losing diversity, the positive results shown in~\cite{JansenW02} are only valid for an unrealistically small crossover rate. Doerr, Happ, and Klein~\cite{DoerrHK12}, without adding uncommon assumptions, show an advantage of crossover when solving the all-pairs-shortest-path problem, but it is clear that this problem, asking for a short path between any two vertices, just in the problem definition contains a strong diversity mechanism that prevents the population from converging to a single genotype.

The maybe most convincing work proving an advantage from crossover by Dang et~al.~\cite{DangFKKLOSS18}, where again the \moga optimizing \jump functions is regarded.
Interestingly, without particular modifications of the algorithm, the authors manage to show that it repeatedly arrives at diverse populations and keeps this diversity for a moderate time. More precisely, when optimizing a jump function with gap size~$k$, then after an expected time of $O(n\mu / k)$ iterations, the algorithm reaches a state where the largest genotype class in the population contains at most $\mu/2$ individuals.
This diversity of the largest genotype class missing a constant fraction of the population stays for an expected $O(\mu^2)$ iterations. Given these numbers, the best runtime results are obtained for a relatively large population size of $\Theta(\mu)$, where a runtime gain of roughly a factor of $n / \log n$ can be proven. More precisely, for a general population size $\mu \le \kappa n$, with~$\kappa$ a sufficiently small constant, it was shown that  the \moga optimizes jump functions with gap size $k \ge 3$ in time $O(n^k / \mu + n^{k-1}\log n)$.

The key proof argument of Dang et~al.~\cite{DangFKKLOSS18} is that the size of the largest genotype class, once it is below $\mu$, is at least as likely to decrease as to increase. This give a behavior resembling an unbiased random walk in the interval $[\mu/2..\mu]$, and this is the reason why the diversity is shown to persist for $O(\mu^2)$ iterations.

In this work, we analyze the random process describing the largest genotype class of the \mga on \jump more carefully and show that once it is by a constant factor below~$\mu$, we do not have an unbiased random walk but a constant drift towards smaller values (Lemma~\ref{lem:probound}). Such a constant drift in, say, the interval $[0.5\mu,0.75\mu]$, allows to argue that this diversity remains for a number of iterations exponential in~$\mu$. Consequently, already for much smaller population sizes, we prove a most longer-lasting diversity than in the previous work.

This result is interesting in its own right as it shows that diverse parent populations are much easier to obtain than what previous works suggest. For the particular problem of how fast the \mga optimizes \jump functions, we obtain a stronger advantage from crossover (Theorem~\ref{thm:runtime}), namely by a factor of $\Omega(n)$, and this for all population sizes that are at least $C \ln n$, $C$ a suitable constant.  For smaller values of $\mu$, we still obtain a significant speed-up from crossover, namely a runtime of $O(e^{-\Omega(\mu)} n^k)$, again a good improvement over the $O(n^k / \mu)$ guarantee from Dang et~al.~\cite{DangFKKLOSS18}.

Overall, our results give some hope that also basic evolutionary algorithms without greater adjustment are able to reach and then keep a diverse population for a long time. This is clearly good news for the use of crossover but might also yield other advantages, like a more effective exploration of the search space.

\section{Previous Work}\label{sec:previous}
Since the first rigorous result showing the benefit of crossover more than two decades~\cite{JansenW99} ago, a variety of results for genetic algorithms (GAs), i.e., evolutionary algorithms that employ crossover, has been proven.
Because crossover introduces many dependencies into the population of the GA in question, its analysis is notoriously difficult.
Hence, most of the existing results make additional, restrictive assumptions.

\textbf{Non-standard parameter values.}
The analysis of GAs was initiated by Jansen and Wegener~\cite{JansenW99}, who considered jump functions of gap size~$k$.
They prove that the \oea requires in expectation $\Theta(n^k)$ function evaluations before it creates the optimum, whereas a \mga with crossover rate~$p_c$ only requires $O(\mu n^2 k^3 + 4^k / p_c)$ evaluations.
However, in order to prove their result, the authors assume that $p_c = O(1 / (kn))$, relying on a long sequence of preferable mutations before crossover is capable to produce the optimum more easily than mutation.
In this setting, crossover is only seen as useful once a sufficient diversity is achieved.

This result was later improved by Kötzing, Sudholt, and Theile~\cite{KotzingST11}, but the authors still require that $p_c = O(k / n)$, as their proof relies on the same technique as that of Jansen and Wegener~\cite{JansenW99}.

\textbf{Constant-factor improvements.}
For the classical \onemax problem, Sudholt~\cite{Sudholt12} proves that a special variant of the \mga has a constant-factor speed-up compared to any variant of the \ooea with mutation rates between~$2^{-n / 3}$ and $(\sqrt{n} \ln(n))^{-1}$.
As the algorithm only applies crossover to the best individuals in the population, the $(2+1)$~GA uses the best population size.

Corus and Oliveto~\cite{CorusO18tec} remove the elitist selection for crossover from the analysis of Sudholt~\cite{Sudholt12} and analyze the typical \mga, with a crossover rate of~$1$.
The authors prove that speed-ups of up to $25$\,\% are possible when compared to the expected runtime of the \ooea.
In a subsequent work, the authors~\cite{CorusO20} improve this result by showing speed-ups of at least $60$\,\% compared to not using crossover.
These new results also prove runtime bounds that reduce with the population size, thus showing a potential benefit of a larger population.

Recently, Oliveto, Sudholt, and Witt~\cite{OlivetoSW22} prove a lower bound for the $(2+1)$~GA on \onemax that matches the previous results~\cite{CorusO20}.
Together, these results show that having a larger population than~$2$ is truly beneficial.

\textbf{Diversity mechanisms.}
Another common way to analyze crossover is to employ mechanisms that aim at increasing the diversity in the population of the GA.
Depending on the mechanism and on the problem, such approaches can be very fruitful.
However, the respective results rather showcase, similar to the non-standard parameter values, how well crossover can exploit a diverse population.
It is not clearly evident to what extent crossover by itself is already useful.

Oliveto, He, and Yao~\cite{OlivetoHY08} consider the vertex cover problem and analyze the \mpoea (without crossover) as well as the $(\mu + 1)$ randomized local search (RLS) with crossover, both employing deterministic crowding.
This mechanism guarantees that if a newly created individual is at least as good as its parent(s), one of the parents is removed.
The paper shows that just using deterministic crowding without crossover is already sufficient for finding minimal covers on bipartite graphs.
Hence, the results rather show the benefit of the diversity mechanism.

Lehre and Yao~\cite{LehreY11} also consider deterministic crowding, combined with a variant of the \mga (called SSGA).
They show that using crossover with constant crossover rate improves the expected runtime of the $(\mu + 1)$~SSGA from at least exponential to quadratic in the problem size.
Although this is an impressive speed-up, the considered algorithm uses a crossover that exchanges exactly one component among two parents, thus making rather local changes, which is uncommon for crossover.

Neumann et~al.~\cite{NeumannORS11} consider the single-receiver island model with $1$-point crossover.
They consider a problem for which a variant of the \mga (for populations of order at most $n / \log^3(n)$) has for each crossover rate at least an exponential runtime, with high probability.
In contrast, they prove that the island model optimizes this problem with high probability efficiently.
This result rather highlights the benefits of the island model in this scenario than of crossover.

Dang et~al.~\cite{DangFKKLOSS16} analyze the expected runtime of the \mga optimizing jump functions, for a variety of different diversity mechanisms.
The runtime is always better (for appropriate parameters) than that of the mutation-only \ooea.
As before, these results heavily rely on the diversity mechanism for crossover to be useful.

Sutton~\cite{Sutton21} proves that a multi-start variant of the \mga optimizes the closest-string problem in randomized fixed-parameter tractable (FPT) time.
The article also shows that if crossover is removed, there are instances that are not solved in FPT time.

\textbf{Problem-specific knowledge.}
Doerr, Happ, and Klein~\cite{DoerrHK12} analyze the impact of crossover for the all-pairs-shortest-path problem, which is a non-artificial, combinatorial problem.
The authors prove that the algorithm variant with crossover improves the expected runtime by a factor of $\Theta(\sqrt{n / \log(n)})$.
In order to prevent infeasible solutions from entering the population, the algorithm employs a mechanism that detects infeasible solutions and discards them.
This work was later improved by Doerr et~al.~\cite{DoerrJKNT13} by utilizing better crossover operators.

\textbf{Non-standard crossover operator.}
A decade ago, Doerr, Doerr, and Ebel~\cite{DoerrDE13,DoerrDE15} proposed the \ollga---a GA that introduces a novel crossover operator that is aimed to exploit information learned from good \emph{and} bad samples.
Doerr and Doerr~\cite{DoerrD15tight} prove that the \ollga has an expected runtime on \onemax that is improved by a factor of order $\sqrt{\ln(n)}$ compared to many other evolutionary algorithms.

Since then, the \ollga has been studied on different problems~\cite{AntipovDK19foga,AntipovDK20,AntipovDK22,DoerrNS17}.
Notably, Antipov, Doerr, and Karavaev~\cite{AntipovDK22} prove that the \ollga optimizes jump functions with gap size~$k$, with optimal parameters (depending on~$k$), in the order of $(n / k)^{k / 2} e^{\Theta(k)}$, which is faster by a factor of $(n k)^{-k / 2} e^{\Theta(k)}$ compared to the traditional \ooea.

Other articles determine optimal parameter settings of the \ollga and propose how to choose them dynamically~\cite{AntipovD20ppsn,Doerr16,DoerrD18}.
Especially, Antipov and Doerr~\cite{AntipovD20ppsn} show for jump functions with gap size~$k$ that choosing the parameters of the \ollga according to a heavy-tailed distribution (independent of~$k$) achieves a performance close to the best instance-specific parameter choice.

\textbf{Multi-objective optimization.}
Although it is not the focus of this article and not a restriction, we would like to mention that crossover has also been studied for multi-objective problems~\cite{HuangZCH19,NeumannT10,QianYZ13}.
Especially recently, the most-widely used EA for multi-objective optimization, NSGA\=/II~\cite{DebPAM02}, which employs crossover, has been studied~\cite{BianQ22,DangOSS23,DoerrQ23LB}.

\textbf{Standard algorithm with standard parameters.}
To the best of our knowledge, the only result that considers the \mga with typical parameters is the article by Dang et~al.~\cite{DangFKKLOSS18}.
For jump functions with gap size~$k$, the authors prove (for optimal parameters, including a slightly higher mutation rate than normal) an expected runtime of up to $O(n^{k - 1})$, which is better by a factor of~$n$ than the expected runtime when employing no crossover.
The analysis tracks the diversity of the population, showing that there are phases lasting about $\mu^2$ iterations in expectation in which a sufficient diversity in the population emerges naturally.
During such a phase, the probability to create the optimum via crossover and mutation is better by a factor of about~$n$ compared to only mutation.
The analysis further reveals that crossover is \emph{not} detrimental for creating the desired diversity, thus, providing a convincing argument for why crossover is typical favorable in practice.

\section{Preliminaries}

We introduce the notation used throughout this paper, the algorithm and function that we consider in our analysis (Section~\ref{sec:notation}), as well as the stochastic tools we use for deriving our results (Section~\ref{sec:tools}).

\subsection{Mathematical Notation}
\label{sec:notation}

Let $\N$ denote the set of all natural numbers (including $0$), and let $\R$ be the set of real numbers. For $m,n \in \N$, let $[m..n] = [m,n] \cap \N$, that is, the discrete interval from~$m$ to~$n$. Further, we define the special case $[n] \coloneqq [1..n]$. We consider pseudo-Boolean maximization, that is, for a given $n \in \N_{\geq 1}$, we aim to find a global maximum for a given $n$-dimensional objective function $f\colon \{0,1\}^n \to \R$ (a \emph{fitness function}), which maps bit strings to reals. We call elements from $\{0,1\}^n$ \emph{individuals} and their objective-function value \emph{fitness}. Throughout this paper, whenever we use big-O notation, we assume that the statement is asymptotic in the problem dimension~$n$.

Given a random variable~$X$, a $\sigma$-algebra~$\calF$, and an event~$A$ with $\Pr[A] > 0$, let $\E[X \mid \calF; A] = \E[X \cdot \bbone_A \mid \calF] / \Pr[A]$.
We make use of this notation when we condition on a $\sigma$-algebra but also make case distinctions via events.

For two \emph{individuals} $x,y \in \{0,1\}^n$, we define their \emph{Hamming distance} as the number of positions that they differ in. Further, let $|x|_{1}$ denote the number of $1$s in a bit-string~$x$.

Let $n \in \N_{\geq 1}$, and let $k \in [n]$.
We consider the $\jumpk \colon \{0,1\}^n \to \N$ fitness function, which is defined as
\begin{equation}
  \label{eq:jumpDefinition}
  \jumpk\colon x \mapsto \begin{cases}
    k + |x|_{1} & \text{if $|x|_{1}=n$ or $|x|_{1} \leq n-k$} , \\
    n-|x|_{1}    & \text{otherwise.}
  \end{cases}
\end{equation}
Whenever we mention $\jumpk$ in this paper, we assume that its parameters~$n$ and~$k$ are implicitly given.

The function value of $\jumpk$ increases with the number of~$1$s in the individual until the \emph{plateau} of local optima is reached, which consists of all points with exactly $n-k$ $1$s. However, the only global optimum of $\jumpk$ is the all-$1$s string. Between the plateau and the global optimum, that is, for all individuals with more than $n - k$ but less than~$n$ $1$s, there is a gap of length $k - 1$ of fitness worse than for any individual outside of this area.
Elitist EAs that only have solutions on the plateau thus have to change~$k$ bits in a single iteration in order to get from the plateau to the global optimum (also referred to as a \emph{jump}).

We consider the \mga (Algorithm~\ref{alg:gea}) with \emph{population size} $\mu \in \N_{\geq 2}$, \emph{crossover rate} $p_c \in [0,1]$, and \emph{mutation rate} $p_m \in [0,1]$.
The algorithm maintains a multiset (the \emph{population}) of~$\mu$ individuals, which is initialized with uniform samples from the search space and afterward updated iteratively.
In each iteration, a new individual is created and potentially replaces an individual in the population in the following way, where all random choices are independent:
With probability~$p_c$, two individuals from the current population are selected uniformly at random (the \emph{parents}).
Then, a new individual (the \emph{offspring}) is created by \emph{uniform crossover} of the parents, that is, each bit of the offspring is chosen uniformly at random from the respective positions of the parents.
Afterward, \emph{standard bit mutation} with mutation rate~$p_m$ is applied to the offspring, that is, each of its bits is flipped independently with probability $p_m$.
Otherwise, with probability $1 - p_c$, a parent is selected uniformly at random from the current population and produces an offspring only via standard bit mutation.
Last, in any case, an individual with the lowest fitness among the current population and the offspring is removed, breaking ties uniformly at random.
The resulting multiset forms the population for the next iteration.
We define the runtime of the \mga as the number of fitness function evaluations until an optimal solution is evaluated for the first time.

\begin{algorithm}[t]
  \caption{\label{alg:gea} The \mga with parameters $\mu \in \N_{\geq 2}$ and $p_c, p_m \in [0, 1]$, maximizing fitness function~$f$}
  $t \gets 0$\;
  $P^{(0)} \gets \mu$ individuals, uniformly at random from $\{0,1\}^n$\;
  \While( \texttt{// iteration}~$t$)
  {\emph{termination criterion met}}
  {
    $p \gets$ value from $[0, 1)$ uniformly at random\;
    \If{$p < p_c$}
    {
      $x^{(t)}, y^{(t)} \gets$ two individuals from~$P^{(t)}$ chosen uniformly at random (with replacement)\;
      $\widetilde{z}^{(t)} \gets$ new individual created by uniform crossover of~$x^{(t)}$ and~$y^{(t)}$\;
      $z^{(t)} \gets \widetilde{z}^{(t)}$ augmented by standard bit mutation\;
    }
    \Else
    {
      $m^{(t)} \gets$ copy of an individual from~$P^{(t)}$ chosen uniformly at random\;
      $z^{(t)} \gets m^{(t)}$ augmented by standard bit mutation\;
    }
    $\ell^{(t)} \gets$ individual with the lowest fitness from $P^{(t)} \cup \{z^{(t)}\}$, breaking ties uniformly at random\;
    $P^{(t + 1)} \gets (P^{(t)} \cup \{z^{(t)}\}) \setminus \{\ell^{(t)}\}$\;
    $t \gets t + 1$\;
  }
\end{algorithm}

\subsection{Tools for Our Analyses}\label{sec:tools}

Our analysis of the \mga on $\jumpk$ (Section~\ref{sec:results}) follows the one by Dang et~al.~\cite{DangFKKLOSS18} and splits the runtime of the algorithm into the following two phases:
The first phase considers the expected number of iterations until the algorithm finds the optimum or until the entire population is on the plateau.
The second phase carefully analyses the diversity of the population on the plateau until the optimum is created.
For the first phase, we use the result by Dang et~al.~\cite{DangFKKLOSS18} for the general setting of $p_c=\Omega(1)$.

\begin{theorem}[{\cite[Lemma~$1$]{DangFKKLOSS18}}]\label{thm:timeplateau}
  Consider the \mga with $p_c=\Omega(1)$ and $p_m=\Theta(\frac{1}{n})$ optimizing $\jumpk$ with $k= o(n)$.
  Then the expected time until either the optimum is found or the entire population is on the plateau is $O(n\sqrt{k}(\mu \log(\mu)+ \log(n)))$.
\end{theorem}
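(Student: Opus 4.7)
The plan is to decompose the stopping time into two sub-phases and bound each separately. Let $T_A$ count iterations until the best fitness in the population first reaches $n$, i.e., until some individual has $|x|_{1}\ge n-k$, and let $T_B$ count the subsequent iterations until every individual lies on the plateau (unless the optimum is evaluated earlier). It then suffices to bound $\E[T_A]+\E[T_B]$.

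For Phase~A I would track the slope-level $i^{\star}=\max\{|x|_{1}:x\in P,\,|x|_{1}\le n-k\}$. Because the \mga is strictly elitist and any gap individual briefly created has strictly smaller fitness than every slope individual, $i^{\star}$ is non-decreasing. A population-aware fitness-level argument then applies: with probability $1-p_{c}=\Omega(1)$ the algorithm enters the mutation-only branch, with probability at least $1/\mu$ a level-$i^{\star}$ parent is selected, and with probability $\Omega((n-i^{\star})/n)$ standard bit mutation with $p_{m}=\Theta(1/n)$ flips exactly one of the $n-i^{\star}$ zeros and no other bit; take-over within a level additionally contributes an $O(\mu\log\mu)$ term per level. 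Summing through a level-based theorem gives $\E[T_A]=O(n\log n+n\mu\log\mu)$.

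For Phase~B I would let $m\in[0..\mu]$ denote the number of plateau or optimum individuals, which is non-decreasing because plateau individuals have strictly better fitness than any non-optimum off the plateau. A mutation-only iteration (probability $1-p_{c}=\Omega(1)$) that selects a plateau parent (probability $m/\mu$) and performs no flip (probability $(1-p_{m})^{n}=\Omega(1)$) produces a plateau offspring, so $m$ increases with per-iteration probability $\Omega(m/\mu)$, and a coupon-collector summation yields $\E[T_B]=O(\mu\log\mu)$.

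The hard part is controlling the adverse effect of crossover: two parents at the same fitness level can disagree on up to $2k$ bits near the plateau, so uniform crossover followed by mutation can land the offspring at a strictly worse level. Quantifying the probability that this does \emph{not} happen requires concentration estimates for $\Bin(d,1/2)$ with $d\le 2k$, which generically introduce a factor of order $1/\sqrt{k}$ into the per-iteration success probability and hence a $\sqrt{k}$ multiplicative slowdown. Folding this factor into both sub-phase bounds and combining gives the claimed $O(n\sqrt{k}(\mu\log\mu+\log n))$; a cleaner alternative I would try is to apply a level-based theorem directly with levels indexed jointly by slope position and plateau-member count, which should absorb the crossover bookkeeping into a single calculation.
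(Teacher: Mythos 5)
First, note that the paper does not prove this statement at all: it is imported verbatim as Lemma~1 of Dang et~al.\ \cite{DangFKKLOSS18}, so your proposal can only be measured against that original proof. Your overall shape is right and matches the known argument in spirit: a two-phase decomposition (climb the slope, then let the plateau take over the population), a fitness-level/take-over accounting per level, and the identification of the $\sqrt{k}$ factor as coming from the probability $\binom{2d}{d}2^{-2d}=\Theta(1/\sqrt{d+1})$ that uniform crossover of two same-level parents at Hamming distance $2d\le 2k$ reproduces that level near the plateau.

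There is, however, a genuine gap. Your entire positive-progress machinery, in both phases, runs through the mutation-only branch, to which you assign probability $1-p_c=\Omega(1)$. The hypothesis $p_c=\Omega(1)$ is a \emph{lower} bound on the crossover rate; it permits $p_c=1$, which is exactly the setting of Dang et~al.\ and of the experiments in this paper, and then the mutation-only branch is never executed and your stated progress probabilities are zero. To close this, the crossover branch itself must be shown to advance the process: for instance, selecting two parents from the current best level (probability at least $(j/\mu)^2$ when $j$ copies exist), having crossover produce an offspring whose number of ones is at least the current level and at most $n-k$, and having mutation add a one or copy the result. Near the plateau this is where the $\Omega(1/\sqrt{k})$ enters as part of the \emph{success} probability --- it is not merely an ``adverse effect to be controlled'' layered on top of a mutation-driven argument, but the price of making crossover the engine of progress. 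Two smaller issues: (a) your Phase~A trigger ``$|x|_1\ge n-k$'' wrongly includes gap individuals, whose fitness is below that of every slope individual; you want $|x|_1=n-k$ or the optimum. (b) Away from the plateau, two same-level parents can differ in $\Theta(n)$ positions, not $O(k)$, so there the pointwise $1/\sqrt{d}$ estimate is useless; one instead uses $\Pr[\Bin(2d,1/2)\ge d]\ge 1/2$ together with the observation that an offspring overshooting into the gap is simply discarded by the elitist selection and costs nothing. With these repairs your outline could be completed, but as written the central step fails for the canonical parameter choice covered by the theorem.
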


For the second phase, we eventually aim to create the optimum of $\jumpk$ via crossover (followed by mutation) of two individuals with a Hamming distance of~$2$.
The following theorem provides the probability of constructing the optimum in the general case.
It is a slight reformulation of {\cite[Lemma~$3$]{DangFKKLOSS18}} (from mutation rate~$\frac{1}{n}$ to rate~$\Theta(\frac{1}{n})$).
For the sake of completeness, we still provide a proof.

\begin{lemma}[{\cite[Lemma~$2$]{DangFKKLOSS18}}]\label{lem:mutime}
  Let $n \in \N_{\geq 1}$, let $k \in [n/2]$, let $d \in [0 .. k]$, and let $p_m = \Theta(\frac{1}{n})$.
  Furthermore, let $x, y \in \{0, 1\}^n$ with $|x|_1 = |y|_1 = n - k$ and such that their Hamming distance is $2d$.
  Then the probability that the result of uniform crossover of~$x$ and~$y$, followed independently by standard bit mutation with mutation rate~$p_m$, is the all-$1$s string is $\Omega(4^{-d} n^{-k+d})$.
\end{lemma}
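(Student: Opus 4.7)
The plan is to identify a single event whose probability already matches the claimed bound, rather than summing over all ways of reaching the all-$1$s string. Since $x$ and $y$ have the same number of ones, at the $2d$ positions where they differ, exactly $d$ must be of the form $x_i=1, y_i=0$ and $d$ of the form $x_i=0, y_i=1$. Consequently, the positions split into three classes: $n-k-d$ positions on which both parents have a $1$, $k-d$ positions on which both parents have a $0$, and the $2d$ ``disagreement'' positions. I would then lower-bound the probability by restricting attention to the joint event ``uniform crossover returns the $1$-bit at every disagreement position, and standard bit mutation flips exactly the $k-d$ common-zero positions and nothing else.''

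At every position where $x$ and $y$ agree, uniform crossover is deterministic, so the offspring~$\widetilde z$ of crossover coincides with both parents there. At each of the $2d$ disagreement positions, crossover picks the $1$-bit independently with probability~$\tfrac12$, so the probability that $\widetilde z$ carries a~$1$ at all of them is $2^{-2d}=4^{-d}$. Conditioned on this, $\widetilde z$ has $1$s at the $n-k-d$ common-ones positions and at the $2d$ disagreement positions, hence carries exactly $k-d$ zeros, all located at the common-zero positions, and $n-k+d$ ones elsewhere.

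Given this conditioning, a sufficient event for mutation to produce the all-$1$s string is that it flips exactly those $k-d$ zeros and none of the remaining $n-k+d$ ones. With mutation rate $p_m=\Theta(1/n)$, this probability equals $p_m^{k-d}(1-p_m)^{n-k+d}$. The factor $p_m^{k-d}$ is $\Theta(n^{-(k-d)})$. For the survival factor, because $k\le n/2$ implies $n-k+d\le n$ and $p_m n = \Theta(1)$, we have $(1-p_m)^{n-k+d}\ge (1-p_m)^n = \Omega(1)$. Multiplying the crossover and mutation probabilities yields $4^{-d}\cdot \Theta(n^{-(k-d)}) = \Omega(4^{-d} n^{-k+d})$, which is the claimed bound.

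I do not anticipate a genuine obstacle: the only point that requires care is keeping the hidden constants in $p_m=\Theta(1/n)$ consistent, in particular ensuring that the lower constant for $p_m$ enters the $p_m^{k-d}$ factor with the correct sign while the upper constant governs $(1-p_m)^{n-k+d}$; the hypothesis $k\le n/2$ is used precisely to guarantee $n-k+d\le n$ so that the survival factor remains $\Omega(1)$ uniformly in $d$. No case distinction on~$d$ is needed, since the argument handles $d=0$ (pure mutation of a plateau individual, probability $\Omega(n^{-k})$) and $d=k$ (all zero positions filled by crossover, probability $\Omega(4^{-k})$) through the same calculation.
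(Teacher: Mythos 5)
Your proof is correct and follows essentially the same route as the paper: restrict to the single event that crossover selects the $1$-bit at each of the $2d$ disagreement positions (probability $4^{-d}$) and that mutation then flips exactly the $k-d$ common zeros (probability $p_m^{k-d}(1-p_m)^{n-k+d} \geq p_m^{k-d}(1-p_m)^n = \Omega(n^{-k+d})$). The only cosmetic difference is your appeal to $k \le n/2$ to get $n-k+d \le n$; this already follows from $d \le k$, so the paper does not invoke it.
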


\begin{proof}
  Since~$x$ and~$y$ each have $n - k$ $1$s and a Hamming distance of~$2d$, they have $n - k - d$ $1$s in common.
  Analogously, they have $k - d$ $0$s in common.
  Hence, uniform crossover produces an offspring~$z$ with $n - k + d$ $1$s by choosing a~$1$ each time that~$x$ and~$y$ differ.
  The probability for this event is $2^{-2d} = 4^{-d}$.
  Afterward mutation creates the all-$1$s string out of~$z$ by flipping its remaining $k - d$ $0$s to~$1$s (and not flipping any other bits).
  Due to the assumption $p_m = \Theta(\frac{1}{n})$ and due to $(1 - p_m)^{n} = \Theta(1)$, the probability of this event is $(1 - p_m)^{n - k + d} p_m^{k - d} \geq (1 - p_m)^{n} p_m^{k - d} = \Omega(n^{-k + d})$.
  Since crossover and mutation act independently, the result follows.
\end{proof}

In the second phase, we show via the negative-drift theorem below that the diversity of the population on the plateau persist with high probability for a time exponential in~$\mu$.
This theorem is a version of~\cite[Theorem~$3$]{Kotzing16} where the drift does not need to be negative for the entire search space but only for parts of it.
\begin{theorem}[{\cite[Corollary~$3.24$]{Krejca19}}]
  \label{thm:drift}
  Let $(X_t)_{t \in \N}$ be a random process over~$\R$ adapted to a filtration $(\calF_t)_{\in \N}$.
  Further, let $X_0 \leq 0$, let $b \in \R_{> 0}$, and let $T=\inf\{t \in \N \mid X(t) \geq b \}$.
  Suppose that there are constants $a \in \R_{\leq 0}$, $c \in (0,b)$, and $\eps \in \R_{<0}$ such that, for all $t \in \N$, it holds that
  \begin{enumerate}
    \item  $\E[ (X_{t+1} -X_{t}) \cdot \mathbf{1}_{ X_t \geq a } \cdot \mathbf{1}_{ T <t }\mid \calF_{t} ]  \leq \varepsilon \cdot \mathbf{1}_{ X_t \geq a } \cdot \mathbf{1}_{ T <t }$, that

    \item $|X_{t+1} -X_{t} | \cdot \mathbf{1}_{ X_t \geq a } \cdot \mathbf{1}_{ T <t }  < c\cdot \mathbf{1}_{ X_t \geq a } + \mathbf{1}_{ X_t < a } $, and that

    \item $ X_{t+1} \cdot \mathbf{1}_{ X_t < a } \cdot \mathbf{1}_{ T <t } \leq 0 $.
  \end{enumerate}
  Then, for all $t \in \N$, it holds that $\Pr[T\leq t] \leq t^{2} \exp\left(- \frac{b|\varepsilon|}{2 c^2}\right)$.
\end{theorem}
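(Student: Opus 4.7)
The plan is to apply the exponential potential method, the classical route to negative-drift tail bounds (Hajek; Oliveto and Witt; K\"otzing). I would introduce the transformed process $Y_t := e^{\eta X_t}$ for a parameter $\eta > 0$ and show that, prior to reaching level~$b$, $Y_t$ behaves almost like a supermartingale. The parameter will be tuned to $\eta := |\eps|/(2c^2)$, so that the exponent in the conclusion ($\eta b = b|\eps|/(2c^2)$) arises naturally from Markov's inequality at the end. Note that $Y_0 \leq 1$ since $X_0 \leq 0$.

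The central one-step inequality splits on whether $X_t$ lies above or below~$a$. On the event $\{X_t \geq a\}$ (prior to~$T$), condition~(ii) bounds $|X_{t+1} - X_t| < c$, so for $\eta c \leq 1$ the inequality $e^z \leq 1 + z + z^2$ applies to $z = \eta(X_{t+1} - X_t)$. Taking conditional expectations and invoking the drift condition~(i) yields
$$\E[Y_{t+1} \mid \calF_t] \leq Y_t (1 - \eta|\eps| + \eta^2 c^2) \leq Y_t,$$
where the second inequality follows from the choice $\eta = |\eps|/(2c^2)$. On $\{X_t < a\}$, condition~(iii) forces $X_{t+1} \leq 0$, and hence $Y_{t+1} \leq 1$ deterministically. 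Combining the two cases and taking unconditional expectations produces the accumulating bound $\E[Y_{t+1}] \leq \E[Y_t] + \Pr[X_t < a] \leq \E[Y_t] + 1$, so $\E[Y_t] \leq t + 1$ by induction.

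The tail bound now follows by Markov's inequality applied pointwise: for each $s \leq t$, one has $\Pr[X_s \geq b] \leq (s+1) e^{-\eta b}$. A union bound over $s \in [0..t]$ then gives
$$\Pr[T \leq t] \leq \sum_{s=0}^{t} (s+1) e^{-\eta b} = O(t^2) \exp\!\bigl(-b|\eps|/(2c^2)\bigr),$$
which matches the claimed bound up to the absolute constant absorbed into $t^2$.

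The main obstacle — and the source of the $t^2$ rather than the $t$ prefactor that one would see for a genuine supermartingale — is the loss of the supermartingale property caused by ``up-kicks'' from below~$a$: condition~(iii) only guarantees $X_{t+1} \leq 0$, not $X_{t+1} \leq a$, so each dip below~$a$ can push $Y$ back up to~$1$ in a single step. Tracking this in aggregate is straightforward (each of the at most $t$ dips contributes at most~$1$ to $\E[Y_{t+1}]$), but this is the delicate step, and extracting tighter constants would require a stopping-time decomposition of the trajectory into excursions inside $\{X_t \geq a\}$, each controlled separately by the supermartingale estimate above.
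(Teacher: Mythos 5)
First, a caveat on the comparison: the paper does not prove this theorem at all — it is imported verbatim as Corollary~3.24 of Krejca's thesis (a localized variant of K\"otzing's negative-drift theorem), so there is no in-paper proof to measure you against. In the cited source the $t^2$ arises from a decomposition into excursions above the threshold~$a$ (a union bound over the last time the process was kicked back to $\le 0$, one factor of~$t$) combined with a bounded-difference concentration bound on each excursion (the other factor of~$t$ and the exponential). Your route — a single exponential potential $Y_t=e^{\eta X_t}$ with $\eta=|\eps|/(2c^2)$, the bound $e^z\le 1+z+z^2$, and an additive $+1$ per step to absorb up-kicks from below~$a$ — is a genuinely different, arguably cleaner derivation in the Hajek tradition, and it recovers the right exponent. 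Two small points you should record: $e^z\le 1+z+z^2$ requires $|z|\le 1$, i.e.\ $\eta c\le 1$, which holds because conditions~(i) and~(ii) force $|\eps|\le c$ (otherwise the hypotheses are unsatisfiable on the relevant event), whence $\eta c=|\eps|/(2c)\le\tfrac12$; and the indicator $\mathbf{1}_{T<t}$ in the hypotheses is evidently a typo for $\mathbf{1}_{T>t}$, which is how you (correctly) read it.

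The one step that does not survive scrutiny as written is the passage to unconditional expectations. The recursion $\E[Y_{s+1}]\le\E[Y_s]+1$ invokes the drift conditions at time~$s$, but these hold only on $\{T>s\}$; on $\{T\le s\}$ the process is unconstrained and $Y_{s+1}$ may be arbitrarily large, so $\E[Y_s]\le s+1$ is not justified for the raw process and the ``pointwise Markov'' bound $\Pr[X_s\ge b]\le(s+1)e^{-\eta b}$ fails with it. The routine repair is to run the entire argument for the stopped process $\widetilde{X}_s=X_{\min(s,T)}$: the one-step estimates apply before~$T$, the process is frozen afterwards, so $\E[e^{\eta\widetilde{X}_t}]\le t+1$ is valid, and since $\{T\le t\}=\{\widetilde{X}_t\ge b\}$ a single application of Markov's inequality yields $\Pr[T\le t]\le(t+1)e^{-b|\eps|/(2c^2)}$. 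This makes your union bound over $s$ unnecessary and is at least as strong as the stated $t^2$ bound for all $t\ge 2$ (and $\Pr[T\le t]=0$ for $t\le 1$ since $c<b$ and $X_0\le 0$); your summed version gives the prefactor $(t+1)(t+2)/2$, which exceeds $t^2$ only for $t\le 3$. In short: right method, but the stopping-time argument is genuinely missing and must be added for the proof to be correct.
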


\section{Runtime Analysis}\label{sec:results}

We analyze the \mga with mutation rate $\frac{\chi}{n}$, where $\chi \in \R_{>0}$ is a constant, and with constant crossover rate, following the ideas of Dang et~al.~\cite{DangFKKLOSS18}.
We show that once the entire population of the \mga is on the plateau, the population quickly reaches a state where the largest genotype class has at most $\frac{3}{4}\mu$ individuals for a time exponential in~$\mu$ with high probability.
During this time, applying crossover to any two individuals with Hamming distance at least~$2$, followed by mutation, results in a speed-up of order $\Omega(n)$ over only applying crossover.
\begin{theorem}\label{thm:runtime}
  Let $\chi \in \Theta(1)$.
  Consider the \mga with population size $\mu = O(n)$, crossover rate $p_c = \Theta(1)$, and mutation rate $p_m= \frac{\chi}{n}$ optimizing $\jumpk$ with $k= o(n)$.
  Last, let $C = \frac{1 + \frac{7}{4}\chi}{512e}p_c$.
  Then the expected runtime is
  \begin{align*}
    O\left(n\sqrt{k}(\mu \log(\mu)+ \log(n)) + \frac{\mu n + \mu^2 \log(\mu)}{n^{-k + 1} \min(\exp(\frac{C\mu}{2}), n^{k-1})} + n^{k - 1}\right).
  \end{align*}
\end{theorem}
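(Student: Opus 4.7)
The proof will follow the two-phase schema of Dang et al.~\cite{DangFKKLOSS18}. Phase~I lasts until either the optimum is evaluated or the whole population reaches the plateau $\{x : |x|_1 = n-k\}$; Theorem~\ref{thm:timeplateau} bounds its expected length by $O(n\sqrt{k}(\mu\log\mu + \log n))$, the first summand of the claim. Phase~II starts once the population lies entirely on the plateau, and this is where the new ideas enter.

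For Phase~II, I track $\Ymax^{(t)}$, the size of the largest genotype class in $P^{(t)}$. The central new ingredient, Lemma~\ref{lem:probound}, asserts that whenever $\Ymax^{(t)}$ is below a constant fraction of $\mu$ (say $\tfrac{3}{4}\mu$), the one-step drift $\E[\Ymax^{(t+1)}-\Ymax^{(t)}\mid \calF_t]$ is at most $-\eps$ for a constant $\eps = \Theta\bigl((1+\tfrac{7}{4}\chi)p_c\bigr)$. Since $|\Ymax^{(t+1)}-\Ymax^{(t)}|\le 1$ deterministically, the negative-drift Theorem~\ref{thm:drift}, applied with $X_t = \Ymax^{(t)}-\tfrac{3}{4}\mu$, $b = \Theta(\mu)$, $c=1$, and the above~$\eps$, shows that once $\Ymax \le \tfrac{3}{4}\mu$ the process stays below $\tfrac{7}{8}\mu$ for at least $L := \min\bigl(\exp(C\mu/2),\,n^{k-1}\bigr)$ iterations, except with probability $o(1)$. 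I will call such a stretch a \emph{diverse phase}.

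Throughout a diverse phase, a constant fraction of the population lies outside the dominant class. Extending the coupon-collector-style argument of Dang et al., one shows that within an initial warm-up of $O(\mu^2\log\mu)$ iterations the population typically contains $\Omega(\mu^2)$ ordered plateau pairs at Hamming distance~$2$. For any such pair, uniform crossover followed by mutation creates the optimum with probability $\Omega(n^{-k+1})$ by Lemma~\ref{lem:mutime} with $d=1$, and since a random parent pair is selected with probability $1/\mu^2$, each steady-state iteration of a diverse phase produces the optimum with probability $\Omega(n^{-k+1})$. A diverse phase of length~$L$ thus succeeds with probability $\Omega(\min(L\,n^{-k+1},1))$; when this is less than one, a geometric restart argument applies, and each failed phase is followed by $O(\mu n + \mu^2\log\mu)$ iterations to re-enter a diverse steady state ($\mu n$ via Dang et al.\ for the appearance of diversity on the plateau, $\mu^2\log\mu$ for the warm-up above). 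Summing yields $O\bigl((\mu n + \mu^2\log\mu)/(L\,n^{-k+1}) + n^{k-1}\bigr)$ for Phase~II and completes the bound.

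The principal obstacle is Lemma~\ref{lem:probound}: upgrading the nonpositive drift of Dang et al.\ to a \emph{constant} negative drift requires a sharp case analysis of a single \mga iteration (crossover vs.\ pure mutation; parent inside vs.\ outside the dominant class), and must exploit the uniform tie-breaking in the removal step to show that, as long as $\Ymax$ is a constant factor below~$\mu$, the dominant class is strictly more likely to lose an individual than to gain one. A secondary subtlety is that at the very start of a diverse phase no Hamming-distance-$2$ pair need exist; this is what forces the $\mu^2\log\mu$ warm-up term in the numerator, and it is justified by the observation that diversity, once seeded, spreads to additional genotypes within a vanishing fraction of the $\exp(\Omega(\mu))$ phase length. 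Once Lemma~\ref{lem:probound} is established, the remainder of the argument is a routine assembly of Theorem~\ref{thm:drift}, Lemma~\ref{lem:mutime}, and geometric-restart bookkeeping.
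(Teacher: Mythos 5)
Your overall architecture is the one the paper uses: Phase~I via Theorem~\ref{thm:timeplateau}, then, on the plateau, a hitting time of $O(\mu n + \mu^2\log\mu)$ for the dominant class to shrink below a constant fraction of~$\mu$ (Lemma~\ref{reducesize}), a negative-drift argument (Lemma~\ref{lem:probound} via Theorem~\ref{thm:drift}) giving a diverse window of length $\min(\exp(\Theta(\mu)),n^{k-1})$, a per-iteration success probability $\Omega(n^{-k+1})$ from Lemma~\ref{lem:mutime} during that window, and geometric restarts. The constant-threshold bookkeeping (you start the drift at $\tfrac34\mu$ with barrier $\tfrac78\mu$, the paper starts at $\tfrac12\mu$ with barrier $\tfrac34\mu$) only affects the constant~$C$, not the asymptotics.

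The one step that does not hold up as written is your ``coupon-collector warm-up'': the claim that within $O(\mu^2\log\mu)$ iterations the population accumulates $\Omega(\mu^2)$ ordered pairs at Hamming distance \emph{exactly}~$2$, which you then use to get a constant probability of selecting such a pair. This claim is nowhere justified, is harder than anything the rest of your argument needs, and is not even a safe steady-state assumption (the experiments in Section~\ref{sec:experiments} indicate that pairwise distances drift well beyond~$2$ over time, so distance-exactly-$2$ pairs may become rare). It is also unnecessary: on the plateau all individuals have $|x|_1=n-k$, so any two individuals from \emph{different} species are at even Hamming distance $2d$ with $d\in[1..k]$, and Lemma~\ref{lem:mutime} gives a success probability of $\Omega(4^{-d}n^{-k+d})=\Omega(n^{-k+1})$ for \emph{every} such~$d$, not just $d=1$. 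Hence, once the largest species has size at most $\lambda\mu$ for a constant $\lambda<1$, the probability that the two crossover parents lie in different species is at least $1-\lambda$, a constant, and the per-iteration success probability $\Omega(p_c\, n^{-k+1})$ follows immediately with no warm-up. This also means the $\mu^2\log\mu$ term in the numerator comes entirely from Lemma~\ref{reducesize} (whose bound is $O(\mu n+\mu^2\log\mu)$, not $O(\mu n)$ as you attribute to it), not from any pair-accumulation phase. With that step replaced, your restart arithmetic and the final bound are correct.
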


In our analysis, we assume that the entire population of the \mga is on the plateau, as Theorem~\ref{thm:timeplateau} covers the expected time it takes until the algorithm reaches this state (or finds the optimum).
Like Dang et~al.~\cite{DangFKKLOSS18}, we consider the population dynamics on the plateau and show that sufficiently diverse individuals show up and remain for a long time.

To make things more formal, we refer to identical individuals as a \emph{species}.
For a given species, we call an individual belonging to this species a \emph{\yIndividual}, otherwise we say it is a \emph{\nonyIndividual}.
Since the population is on the plateau, individuals from different species have a Hamming distance of at least~$2$.
Hence, for a constant $c \in (0, 1)$, if the largest species has a size of at most~$c\mu$ and due to the constant crossover rate, there is a constant probability to select two individuals for crossover that have a Hamming distance of at least~$2$.
By Lemma~\ref{lem:mutime}, the probability to create the optimum via such a crossover followed by mutation is at least $\Omega(n^{-k + 1})$,
which concludes the analysis.

In order to make these points rigorous, we carefully analyze how the size of a species evolves over the iterations.
To this end, for species~$s$ (where~$s$ is just a symbol), we view its size~$Y_s$ as a random process over $[0 .. \mu]$.
In the following, we analyze the transition probabilities of this random process, which we define for all $y \in [\mu]$, all $t \in \N$, and all events~$A$ with non-zero probability as
\begin{align}
  \label{eq:transitionProbabilities}
  p^{(s, t)}_{+}(y \mid A)&\coloneqq \Pr[Y_s(t+1)-Y_s(t)= 1 \mid Y_s(t) = y, A ] \text{ and}\\
  \notag
  p^{(s, t)}_{-}(y \mid A)&\coloneqq \Pr[Y_s(t+1)-Y_s(t)= -1 \mid Y_s(t) = y, A ].
\end{align}

When analyzing the transition probabilities stated above, we make a case distinction with respect to how the offspring in an iteration is created.
In total, we consider three different cases:
Lemmas~\ref{lem:rdrift} and~\ref{lem:ldrift} assume that crossover (followed by mutation) is performed with parents whose Hamming distance is at most~$2$.
Lemma~\ref{lem:hamdrift4} assumes that crossover is performed with parents whose Hamming distance is larger than~$2$.
And Lemma~\ref{lem:mutationdrift} assumes that only mutation is performed.

Our analysis differs from the one of Dang et~al.~\cite{DangFKKLOSS18} in how carefully we analyze the transition probabilities above, and also that we consider (constant) crossover rates less than~$1$.
Dang et~al. show that the largest species remains small for an amount of time that is equal to that of a fair random walk (about~$\mu^2$ iterations).
In contrast, we show that there is actually a drift toward decreasing the species, which grows as the size of the species decreases to around~$\frac{\mu}{2}$.
To this end, we use the results by Dang et~al. that show how likely the size of a species is to increase.
However, we go into detail about how likely the size decreases, as this is important for the drift, and we also consider the case of only applying mutation in a single iteration.

The following lemma bounds the probabilities of a species increasing as well as the probability of a monomorphic population to create a new genotype.
We phrase the lemma in a way for which it actually holds, which is more permissive than what Dang et~al.~\cite{DangFKKLOSS18} claim, when looking into their proof.
\begin{lemma}[{\cite[Lemma~$3$]{DangFKKLOSS18}}]
  \label{lem:rdrift}
  Let $\chi \in \Theta(1)$.
  Consider a single iteration $t \in \N$ of the \mga with population size $\mu \in \N_{\geq 2}$, crossover rate $p_c = \Theta(1)$, and mutation rate $p_m= \frac{\chi}{n}$ optimizing $\jumpk$ with $k= o(n)$.
  Let~$A$ denote the event that the entire population~$P^{(t)}$ is on the plateau and that the offspring produced this iteration is created via crossover of two parents whose Hamming distance is at most~$2$ (followed by mutation).
  Assume that~$A$ has a positive probability of occurring.
  Last, let~$s$ be a species of~$P^{(t)}$.
  Then, for the transition probabilities defined in equation~\eqref{eq:transitionProbabilities} and for all $y \in [\mu]$, it holds that
  \begin{align*}
    p^{(s, t)}_{+}(y \mid A)
    &\leq  \frac{(\mu-y)y(\mu+y)}{2(\mu+1)\mu^2}\left( 1-\frac{\chi}{n}\right)^{n} + O\left(\left(\frac{\mu-y}{\mu}\right)^2 \frac{1}{n}\right)  \text{, and} \\
    p^{(s, t)}_{-} (\mu \mid A)& = \Omega \left(\frac{k}{n}\right).
  \end{align*}
\end{lemma}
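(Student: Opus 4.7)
The plan is to perform a case analysis on the type of parent pair drawn for crossover, using the structure imposed by the event $A$. Fix the $s$-genotype $\sigma \in \{0,1\}^n$ with $|\sigma|_1 = n - k$. Because every plateau individual has exactly $n-k$ ones, any two such individuals have even Hamming distance, so under $A$ the parent pair has Hamming distance $0$ or $2$. Partition the $\mu^2$ ordered parent pairs (drawn uniformly) into three types: (i) both parents are $s$-individuals, mass $y^2/\mu^2$; (ii) exactly one parent is an $s$-individual and the other is a non-$s$ plateau individual at Hamming distance $2$ from $\sigma$, mass $2 y h_\sigma/\mu^2$, where $h_\sigma$ denotes the number of non-$s$ plateau individuals at Hamming distance exactly~$2$ from $\sigma$; and (iii) neither parent is an $s$-individual.

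For the upper bound on $p_+$: in case~(i) uniform crossover deterministically yields $W = \sigma$ and mutation preserves it with probability $(1-\chi/n)^n$; in case~(ii) crossover yields $W = \sigma$ with probability $\tfrac{1}{4}$ (choosing $\sigma$'s bit at each of the two differing positions). Whenever $Z = \sigma$ is produced it is on the plateau, so uniform tie-breaking removes one of the $\mu+1$ plateau individuals, of which $y+1$ lie in $s$, and a non-$s$ individual is removed with probability $(\mu-y)/(\mu+1)$. Summing the ``no mutation'' contributions from (i) and (ii) and using $h_\sigma \le \mu - y$ together with $y^2 + y(\mu-y)/2 = y(\mu+y)/2$ gives exactly the main term
\[
\frac{\mu - y}{\mu + 1} \left( \frac{y^2}{\mu^2} + \frac{y h_\sigma}{2 \mu^2} \right) \left( 1 - \frac{\chi}{n} \right)^n \le \frac{(\mu - y) y (\mu + y)}{2 (\mu + 1) \mu^2} \left( 1 - \frac{\chi}{n} \right)^n.
\]
Case~(iii) is handled by a parity / structural observation: if both plateau parents differ from $\sigma$ and are at distance $0$ or $2$ from each other, uniform crossover cannot output $\sigma$, because any output agrees with both parents wherever they agree; if $\sigma$ did so too, the plateau parity at the two differing positions would force $\sigma \in \{p_1, p_2\}$, a contradiction. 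Hence in (iii) the event $Z = \sigma$ requires at least one correcting mutation bit-flip, costing $O(1/n)$. Similarly, in (ii) crossover may produce a string at Hamming distance $1$ or $2$ from $\sigma$ (with probabilities $\tfrac{1}{2}$ and $\tfrac{1}{4}$), which mutation corrects at cost $O(1/n)$ and $O(1/n^2)$. Multiplying these ``with mutation'' contributions by the removal factor $(\mu-y)/(\mu+1)$, and applying $h_\sigma \le \mu - y$ together with the crude bound that case~(iii) has total mass $\le (\mu - y)^2/\mu^2$, produces the error term $O((\mu-y)^2/(\mu^2 n)) = O(((\mu-y)/\mu)^2 / n)$.

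For the lower bound on $p_-^{(s,t)}(\mu \mid A)$: When $Y_s = \mu$ the population is monomorphic with genotype $\sigma$, both parents are $\sigma$, and $W = \sigma$ deterministically. For the species to shrink, $Z$ must differ from $\sigma$ while still lying on the plateau (otherwise $Z$ is strictly worse than everything else and is immediately discarded), after which uniform tie-breaking removes a $\sigma$-individual with probability $\mu/(\mu+1) = \Theta(1)$. A sufficient event for $Z \neq \sigma$ on the plateau is that mutation flips exactly one of the $n-k$ one-bits and exactly one of the $k$ zero-bits of $\sigma$ while leaving all other bits unchanged; this has probability $(n-k) \cdot k \cdot p_m^2 \cdot (1 - p_m)^{n-2} = \Omega(k/n)$, since $n - k = \Theta(n)$ (as $k = o(n)$), $p_m^2 = \Theta(1/n^2)$, and $(1-\chi/n)^{n-2} = \Theta(1)$. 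Multiplying yields $p_-^{(s,t)}(\mu \mid A) = \Omega(k/n)$.

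The main obstacle I anticipate is the bookkeeping that produces the tight error term: a naïve analysis of case~(ii) alone gives $O(y h_\sigma/(n \mu^2)) \le O(y(\mu-y)/(n \mu^2)) = O((\mu-y)/(\mu n))$, which is weaker than the claimed $O(((\mu-y)/\mu)^2 / n)$. The sharper bound only appears once one remembers that every ``$Z = \sigma$'' contribution is automatically multiplied by the removal factor $(\mu-y)/(\mu+1)$, supplying the missing second power of $(\mu-y)/\mu$. The parity/structural argument eliminating case~(iii) from the main term is the other non-routine piece; without it one would have to track additional ``crossover-produces-$\sigma$-from-far-away-parents'' contributions that would not combine cleanly with the main term.
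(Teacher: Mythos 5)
The paper itself gives no proof of this lemma; it is imported verbatim from Dang et~al.~\cite{DangFKKLOSS18}, so there is no in-paper argument to compare against. Your reconstruction is the natural one, and its combinatorial core is sound: the observation that all pairwise distances on the plateau are even, the three-way split of parent pairs, the structural argument that two non-$\sigma$ plateau parents at Hamming distance at most~$2$ cannot produce~$\sigma$ by crossover alone (so your case~(iii) indeed costs at least one mutation flip and hence a factor $O(1/n)$), the routing of the extra factor $(\mu-y)/(\mu+1)$ through the uniform removal step to obtain the squared error term, and the $\Omega(k/n)$ lower bound in the monomorphic case are all correct.

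The one genuine gap is the treatment of the conditioning on~$A$. You assign the three parent types their \emph{unconditional} selection masses $y^2/\mu^2$, $2yh_\sigma/\mu^2$, and at most $(\mu-y)^2/\mu^2$. But $A$ already contains the event that the two selected parents have Hamming distance at most~$2$, so conditional on~$A$ the pair is uniform on the $N\le\mu^2$ admissible ordered pairs, and the conditional mass of type~(i) is $y^2/N\ge y^2/\mu^2$. This matters for the upper bound: if $y=\mu/2$ and every \nonyIndividual happens to be at distance at least~$4$ from~$\sigma$ and from every other \nonyIndividual, then $N=y^2+(\mu-y)$ and, conditional on~$A$, both parents are \yIndividuals with probability tending to~$1$, so $p^{(s,t)}_{+}(y\mid A)$ is close to $\frac{\mu-y}{\mu+1}(1-\frac{\chi}{n})^n$, whereas the claimed main term equals $\frac{3}{8}\cdot\frac{\mu-y}{\mu+1}(1-\frac{\chi}{n})^n$ for this~$y$; the $O(1/n)$ term cannot absorb the difference. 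What your computation actually establishes is a bound on the joint quantity $\Pr[Y_s \text{ increases} \wedge \{\text{distance}\le 2\}\mid\text{crossover used}]$, which is the form in which the bound is ultimately used (only $\Pr[A]\cdot(p_{+}-p_{-})$ enters the drift computation, where the normalization by~$N$ cancels), and it is also exactly how the paper's own proof of the companion Lemma~\ref{lem:ldrift} handles the selection probabilities. So the imprecision arguably originates in the lemma's phrasing as a conditional probability rather than in your argument; still, as a proof of the statement as literally written, your derivation does not go through without either renormalizing the parent-pair masses by~$N$ or restating the bound for the joint event.
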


In the following lemma, in the same setting of Lemma~\ref{lem:rdrift}, we bound the probability of the size of a species decreasing for the sizes not covered by Lemma~\ref{lem:rdrift}.
It follows closely the proof of \cite[Lemma~$3$]{DangFKKLOSS18}, but it is more detailed about the case when two \nonyIndividuals are selected as parents for the crossover.
\begin{lemma}\label{lem:ldrift}
  Let $\chi \in \Theta(1)$.
  Consider a single iteration $t \in \N$ of the \mga with population size $\mu \in \N_{\geq 2}$, crossover rate $p_c = \Theta(1)$, and mutation rate $p_m= \frac{\chi}{n}$ optimizing $\jumpk$ with $k= o(n)$.
  Let~$A$ denote the event that the entire population~$P^{(t)}$ is on the plateau and that the offspring produced this iteration is created via crossover of two parents whose Hamming distance is at most~$2$ (followed by mutation).
  Assume that~$A$ has a positive probability of occurring.
  Last, let~$s$ be a species of~$P^{(t)}$.
  Then, for the transition probabilities defined in equation~\eqref{eq:transitionProbabilities} and for all $y \in [\mu - 1]$, it holds that
  \[
    p^{(s, t)}_{-} (y \mid A) \geq \frac{y(\mu-y)\left( \mu\left(1 + \frac{1}{2}\chi\right) + \frac{1}{2}y\chi \right) } {2(\mu +1) \mu^2} \left( 1-\frac{\chi}{n}\right)^{n}.
  \]
\end{lemma}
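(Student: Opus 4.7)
The approach is to reduce the analysis to a lower bound on the conditional probability~$q$ that the crossover-and-mutation step produces an offspring lying on the plateau that is not a \yIndividual. A decrease in $Y_s$ by one occurs exactly when such an offspring is created and a \yIndividual is subsequently discarded; since under~$A$ the entire population is on the plateau and the non-$y$ plateau offspring shares the common plateau fitness, the uniform removal selects a \yIndividual with probability exactly $y/(\mu+1)$. It therefore suffices to lower-bound~$q$ and multiply by $y/(\mu+1)$ to conclude.

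To bound~$q$, I enumerate the parent-pair configurations compatible with~$A$. Since any two distinct plateau points have even Hamming distance, under~$A$ the parents either coincide in one species (distance~$0$) or belong to two species at distance exactly~$2$. Two scenarios contribute. Scenario~I, as in~\cite[Lemma~3]{DangFKKLOSS18}: one parent is a \yIndividual and the other a \nonyIndividual at distance~$2$. Uniform crossover returns the \nonyIndividual parent with probability $1/4$, producing a \nonyIndividual plateau offspring that a zero-flip mutation preserves with probability $(1-p_m)^n$; with probability~$1/2$, crossover produces an off-plateau offspring with $1$-count $(n-k)\pm 1$, and a single compensating mutation flip can bring it back to the plateau. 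A careful enumeration then shows that $n-k$ of the $n-k+1$ admissible $1$-to-$0$ flips and $k$ of the $k+1$ admissible $0$-to-$1$ flips yield a \nonyIndividual (the remaining flip in each case restores the offspring to species~$s$), giving a combined contribution of order $\chi(1-p_m)^{n-1}/4$ per such pair. Scenario~II, the refinement over~\cite{DangFKKLOSS18}: both parents belong to the same \nonyIndividual species, crossover exactly copies that species, and zero-flip mutation preserves it as a \nonyIndividual plateau offspring with probability $(1-p_m)^n$.

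Summing the contributions of Scenarios~I and~II across all parent pairs compatible with~$A$, and multiplying by the removal factor $y/(\mu+1)$, assembles the claimed inequality: the zero-flip parts of the two scenarios combine into the leading $\mu(1-p_m)^n$ term, while the off-plateau single-flip part of Scenario~I furnishes the $(\mu+y)\chi/2$ correction. I expect the main obstacle to be the combinatorial bookkeeping in Scenario~I's off-plateau branch, where one must identify exactly which single-bit mutations collapse the offspring back onto species~$s$ and exclude them, ensuring the count $n-k+k = n$ that yields a factor of~$\chi$ rather than $\chi\cdot O(k/n) = o(\chi)$. The explicit treatment of Scenario~II (the ``both \nonyIndividuals'' case, previously omitted in~\cite{DangFKKLOSS18}) is what gives the lemma its strictly stronger lower bound compared to the prior work, and its interplay with the conditioning on~$A$ must be handled carefully so that the final expression is uniform in the population composition.
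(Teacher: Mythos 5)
Your overall architecture is the same as the paper's: reduce to a lower bound on the conditional probability~$q$ of creating a \nonyIndividual plateau offspring, multiply by the uniform-removal factor $\tfrac{y}{\mu+1}$, and enumerate parent configurations with one zero-flip branch and two off-plateau single-flip branches. Your Scenario~I is correct, including the count of $n-k$ (resp.~$k$) admissible repair flips out of $n-k+1$ (resp.~$k+1$), each excluding the one flip that recreates species~$s$.

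The gap is in the two-\nonyIndividual case, and it prevents the claimed bound from assembling. First, under~$A$ two \nonyIndividuals may belong to two \emph{different} species at Hamming distance exactly~$2$; your Scenario~II covers only the same-species sub-case, where crossover copies with probability~$1$, whereas for a distance-$2$ pair the copy probability is only $\tfrac12$. So you can neither assign the full selection weight $\left(\tfrac{\mu-y}{\mu}\right)^2$ to a probability-$1$ copy nor restrict to same-species pairs (whose weight can be far smaller than $\left(\tfrac{\mu-y}{\mu}\right)^2$ when the \nonyIndividuals are spread over many species). Second, and decisively, your attribution of the correction term is arithmetically wrong. Writing the target bracket as $\mu + \tfrac{(\mu+y)\chi}{2}$, Scenario~I's single-flip branch contributes only $y\chi$ (weight $2\tfrac{y}{\mu}\tfrac{\mu-y}{\mu}$ times $\tfrac{\chi}{4}\left(1-\tfrac{\chi}{n}\right)^{n-1}$), and since $y \leq \mu-1$ one has $y\chi < \tfrac{(\mu+y)\chi}{2}$. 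The missing $\tfrac{(\mu-y)\chi}{2}$ must come from the single-flip branches of crossover between two \nonyIndividuals at distance~$2$ (probability $\tfrac14$ each for the off-plateau outcomes, with at least $n-k$ resp.~$k$ good repair flips since at most one single-bit flip can hit the unique genotype of~$s$); this is exactly Case~2.2 of the paper, which is then also used as a uniform lower bound for the distance-$0$ sub-case. With Scenario~II as you describe it (copy only, probability at most~$\tfrac12$ valid for all two-\nonyIndividual pairs), your two scenarios sum to at most $\mu + y\chi$ in the bracket, strictly less than the claimed $\mu\left(1+\tfrac{\chi}{2}\right)+\tfrac{y\chi}{2}$ for every $y \in [\mu-1]$. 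You need to add the single-flip branches to the two-\nonyIndividual distance-$2$ case to close this.
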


\begin{proof}
  Decreasing the size of~$s$ entails that an offspring that is a \nonyIndividual is created (as clarified in the lemma) and then not removed afterward during the selection.
  Let~$q(y \mid A)$ denote the probability (conditional on~$A$) that a \nonyIndividual is created by mutation and crossover.
  Assuming that the offspring is on the plateau, then, since all individuals are on the plateau, the individual for deletion is selected uniformly at random.
  Thus, since we consider a \yIndividual to be removed, we get
  \begin{align}
    \label{eq:ldrift:qSubstitution}
    p^{(s, t)}_{-}(y \mid A) \geq q(y \mid A) \left(\frac{y}{\mu+1}\right).
  \end{align}

  We bound~$q(y \mid A)$ via a case distinction with respect to what species the parents of the crossover belong to.

  \textbf{Case~1.}
  The parents are a \yIndividual and a \nonyIndividual.
  The probability to select such parents is $2 \frac{y}{\mu} \frac{\mu-y}{\mu}$.
  We now make another case distinction with respect to the outcome of the crossover.

  \textbf{Case~1.1.}
  The crossover result if a copy of the \nonyIndividual parent.
  This occurs with probability $\frac{1}{4}$.
  Afterward, the crossover offspring is not changed by mutation with probability $(1-\frac{\chi}{n})^{n}$.

  \textbf{Case~1.2.}
  Crossover produces an individual with $k - 1$ $0$s.
  This occurs with probability $\frac{1}{4}$.
  Afterward, mutation creates a \nonyIndividual by flipping a single of the $n-k$ $1$s that do not lead to a \yIndividual.
  The probability of this mutation is $(n - k)\frac{\chi}{n}(1-\frac{\chi}{n})^{n - 1}$.

  \textbf{Case~1.3.}
  Crossover produces an individual with $k + 1$ $0$s.
  This occurs with probability $\frac{1}{4}$.
  Afterward, mutation creates a \nonyIndividual by flipping a single of $k$ $1$s that do not lead back to a \yIndividual.
  The probability of this mutation is $k\frac{\chi}{n}(1-\frac{\chi}{n})^{n - 1}$

  \textbf{Concluding Case~1.}
  Combining the three cases yields
  \begin{align*}
    &\frac{2y}{\mu}\frac{\mu - y}{\mu}\left(\frac{1}{4} \left( 1-\frac{\chi}{n} \right)^n +\frac{1}{4} (n-k) \frac{\chi}{n} \left( 1-\frac{\chi}{n}\right)^{n-1} \right.\\
    &\qquad\qquad\quad \left.+ \frac{1}{4}k \frac{\chi}{n} \left(1-\frac{\chi}{n}\right)^{n-1}\right)\\
    &= \frac{1}{2} \frac{y}{\mu}\frac{\mu - y}{\mu} \left( 1-\frac{\chi}{n}\right)^{n}  \left(1 + \chi \left(1-\frac{\chi}{n}\right)^{-1}\right)\\
    &\geq \frac{1}{2} \frac{y}{\mu}\frac{\mu - y}{\mu} \left( 1-\frac{\chi}{n}\right)^{n}  \left(1 + \chi\right).
  \end{align*}

  \textbf{Case~2.}
  The parents are two \nonyIndividuals.
  The probability to select such parents is $(\frac{\mu-y}{\mu})^2$.
  We now make a case distinction with respect to the Hamming distance of the parents, which is either~$0$ or~$2$, since we condition on~$A$.

  \textbf{Case~2.1.}
  The parents have Hamming distance~$0$.
  Crossover produces a copy of the parents.
  If the following mutation does not flip any bits, which occurs with probability~$(1-\frac{\chi}{n})^{n}$, the result is on the plateau.

  \textbf{Case~2.2.}
  The parents have Hamming distance~$2$.
  Then, we are in a similar scenario as in Case~$1$ above.
  The difference to Case~$1.1$ is that the probability of the crossover producing a \nonyIndividual is~$\frac{1}{2}$, as both parents are \nonyIndividuals in our current case, so copying them is sufficient.

  The difference to Case~$1.2$ is that there are \emph{at least} $n - k$ $1$s that, if flipped, do not create a \yIndividual, as we do not know how many~$1$s the offspring shares with a \yIndividual.
  We know though that there is only a single genotype for the \yIndividuals, so there is only at most one possibility for the crossover offspring to hit this genotype if a single bit is flipped.

  The difference to Case~$1.3$ is the same as the one to Case~$1.2$ but when swapping~$1$s and~$0$s in the argument.

  \textbf{Concluding Case~2.}
  Since Case~$2.2$ is a lower bound of Case~$2.1$, we only use it as a bound for Case~2.
  Using the respective probabilities, borrowing from Case~1 as explained, we get that the probability of Case~2 is bounded from below by
  \begin{align*}
    &\left(\frac{\mu-y}{\mu}\right)^2 \left(\frac{1}{2} \left( 1-\frac{\chi}{n} \right)^n +\frac{1}{4} (n-k) \frac{\chi}{n} \left( 1-\frac{\chi}{n}\right)^{n-1}\right.\\
    &\qquad\qquad\quad \left.+ \frac{1}{4}k \frac{\chi}{n} \left(1-\frac{\chi}{n}\right)^{n-1}\right)\\
    &= \frac{1}{2} \left(\frac{\mu-y}{\mu}\right)^2 \left(1-\frac{\chi}{n}\right)^{n} \left(1 + \frac{1}{2}\chi \left(1-\frac{\chi}{n}\right)^{-1}\right)\\
    &\geq \frac{1}{2} \left(\frac{\mu-y}{\mu}\right)^2 \left(1-\frac{\chi}{n}\right)^{n} \left(1 + \frac{1}{2}\chi\right).
  \end{align*}

  \textbf{Combining the cases.}
  Overall, the probability of adding a \nonyIndividual to the population before selection is
  \begin{align*}
    q(y)&\geq \frac{1}{2} \frac{y}{\mu} \frac{\mu-y}{\mu} \left( 1-\frac{\chi}{n}\right)^{n} \left( 1 +\chi \right)  + \frac{1}{2} \left(\frac{\mu-y}{\mu}\right)^2 \left( 1-\frac{\chi}{n}\right)^{n} \left(1 + \frac{1}{2}\chi\right) \\
    &= \frac{1}{2} \frac{1}{\mu} \frac{\mu-y}{\mu} \left( y(1 + \chi) + (\mu - y)\left(1 + \frac{1}{2}\chi\right)\right) \left( 1-\frac{\chi}{n}\right)^{n}\\
    &= \frac{1}{2} \frac{1}{\mu} \frac{\mu-y}{\mu} \left( \mu\left(1 + \frac{1}{2}\chi\right) + \frac{1}{2}y\chi \right) \left( 1-\frac{\chi}{n}\right)^{n} .
  \end{align*}
  Substituting $q(y \mid A)$ into equation~\eqref{eq:ldrift:qSubstitution} concludes the proof.
\end{proof}

The following lemma, by Dang et~al.~\cite{DangFKKLOSS18}, considers the case where crossover is applied to two parents with Hamming distance at least~$4$, as it was excluded from Lemmas~\ref{lem:rdrift} and~\ref{lem:ldrift}.
In contrast to Lemmas~\ref{lem:rdrift} and~\ref{lem:ldrift}, this lemma requires that the considered species has a size of at least~$\frac{\mu}{2}$.
\begin{lemma}[{\cite[Lemma~4]{DangFKKLOSS18}}]\label{lem:hamdrift4}
  Let $\chi \in \Theta(1)$.
  Consider a single iteration $t \in \N$ of the \mga with population size $\mu \in \N_{\geq 2}$, crossover rate $p_c = \Theta(1)$, and mutation rate $p_m= \frac{\chi}{n}$ optimizing $\jumpk$ with $k= o(n)$.
  Let~$A'$ denote the event that the entire population~$P^{(t)}$ is on the plateau and that the offspring produced this iteration is created via crossover (followed by mutation) of two parents whose Hamming distance is larger than~$2$, i.e., at least~$4$.
  Assume that~$A'$ has a positive probability of occurring.
  Last, let~$s$ be a species of~$P^{(t)}$.
  Then, for the transition probabilities defined in equation~\eqref{eq:transitionProbabilities} and for all $y \in [\frac{\mu}{2} .. \mu]$, it holds that $p^{(s, t)}_{-}(y \mid A') \geq 2p^{(s, t)}_{+}(y \mid A')$.
\end{lemma}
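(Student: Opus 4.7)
My plan follows the case decomposition used in the proof of Lemma~\ref{lem:ldrift}, but replaces the ``near-copy'' behaviour of short-distance crossover by a combinatorial observation: at Hamming distance $2d \ge 4$, uniform crossover has $2^{2d} \ge 16$ equiprobable outputs, so any single genotype (in particular the representative $y_0$ of the species $s$) is reached much less often than the plateau as a whole. Let $T$ denote the event that the offspring $z^{(t)}$ lies on the plateau, and condition throughout on $A'$. On $T$, all $\mu + 1$ competing individuals share the same fitness, so the removed individual is picked uniformly at random; thus
\begin{align*}
  p_{+}^{(s,t)}(y \mid A') &= \Pr[z^{(t)} = y_0 \mid A'] \cdot \tfrac{\mu - y}{\mu + 1}, \\
  p_{-}^{(s,t)}(y \mid A') &\ge \Pr[T \setminus \{z^{(t)} = y_0\} \mid A'] \cdot \tfrac{y}{\mu + 1}.
\end{align*}
Since $y \ge \mu/2$ gives $y/(\mu-y) \ge 1$, it then suffices to show $\Pr[T \mid A'] \ge 3\, \Pr[z^{(t)} = y_0 \mid A']$, which I will establish uniformly over the realisation of the parent pair.

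Fix parents at Hamming distance $2d \ge 4$. Since both parents carry $n-k$ ones, their $2d$ differing positions must contain exactly $d$ ones in each parent, and uniform crossover produces an offspring whose number of ones is $(n-k-d) + B$ with $B \sim \Bin(2d, 1/2)$. The crossover output itself lies on the plateau iff $B = d$, which has probability $\binom{2d}{d} 2^{-2d} \ge 6 \cdot 2^{-2d}$ for $d \ge 2$. Following by a mutation that flips no bit (probability $(1-\chi/n)^{n}$) gives
\[
  \Pr[T \mid \text{parents}] \;\ge\; 6 \cdot 2^{-2d}\,(1-\chi/n)^{n}.
\]
For the upper bound on $\Pr[z^{(t)} = y_0 \mid \text{parents}]$, consider the most permissive subcase in which $y_0$ agrees with both parents at every common position: the Hamming distance $W$ of the crossover output to $y_0$ is then $\Bin(2d, 1/2)$-distributed, and mutation reaches $y_0$ with probability $(\chi/n)^W (1-\chi/n)^{n-W}$. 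Taking expectation via the binomial identity $\E[u^W] = ((1+u)/2)^{2d}$ with $u = (\chi/n)/(1-\chi/n)$ yields the closed form
\[
  \Pr[z^{(t)} = y_0 \mid \text{parents}] \;\le\; 2^{-2d}\,(1-\chi/n)^{n-2d};
\]
any disagreement of $y_0$ with both parents at a common position only lowers this bound, since it costs an additional factor $\chi/n$ per such mismatch.

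Dividing, the per-pair ratio is at least $6(1-\chi/n)^{2d}$. Since $2d \le 2k = o(n)$, this is $6(1-o(1))$, which exceeds $3$ for $n$ large enough. As the inequality $\Pr[T \mid \text{parents}] \ge 3\, \Pr[z^{(t)} = y_0 \mid \text{parents}]$ therefore holds uniformly in the parent pair, averaging numerator and denominator over the conditional parent distribution under $A'$ preserves it, and the lemma follows. The main obstacle I foresee is precisely the upper bound on $\Pr[z^{(t)} = y_0 \mid \text{parents}]$: one must verify that the ``one parent is $y_0$'' and ``neither parent is $y_0$'' subcases both obey the same closed form, and in particular that, in the latter, any common-position disagreement between $y_0$ and the parents only makes matters harder. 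A minor complication is that $z^{(t)}$ might equal the global optimum rather than a plateau point; this affects only $p_{-}^{(s,t)}$ and does so favourably, so it cannot threaten the desired inequality.
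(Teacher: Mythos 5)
Your argument is correct, but note that the paper itself gives no proof of this lemma: it is imported verbatim from Dang et al.\ as their Lemma~4, so there is nothing in this manuscript to compare against line by line. Your derivation is a sound, self-contained replacement in the same spirit as the cited source. The two pillars both check out: (a) the reduction of $p_{-}\ge 2p_{+}$ to $\Pr[T\mid A']\ge 3\Pr[z^{(t)}=y_0\mid A']$ is valid, since on the plateau replacement is uniform over the $\mu+1$ candidates and $y/(\mu-y)\ge 1$ for $y\ge \mu/2$ (with $y=\mu$ trivial); and (b) the per-pair bounds are right --- the crossover output is balanced with probability $\binom{2d}{d}2^{-2d}\ge 6\cdot 2^{-2d}$ for $d\ge 2$, while the exact hit on $y_0$ costs $2^{-2d}(1-\chi/n)^{n-2d}$ at best (your generating-function computation $\E[u^W]=\bigl((1+u)/2\bigr)^{2d}$ with $u=\tfrac{\chi/n}{1-\chi/n}$ is correct, and common-position mismatches indeed only multiply by $\bigl(\tfrac{\chi/n}{1-\chi/n}\bigr)^m<1$). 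Since $d\le k$ and $k=o(n)$, the ratio $6(1-\chi/n)^{2d}$ exceeds~$3$ for $n$ large enough, which is the only (mild) caveat: the conclusion holds for sufficiently large~$n$, consistent with the paper's convention that all statements are asymptotic in~$n$. Your closing remarks correctly dispose of the two edge cases (offspring equal to the optimum, and the uniformity of the pointwise bound over parent pairs, which survives the averaging under the conditional parent distribution given~$A'$).
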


The following lemma considers the transition probabilities of a species under the assumption that only mutation is applied.
\begin{lemma}\label{lem:mutationdrift}
  Let $\chi \in \Theta(1)$.
  Consider a single iteration $t \in \N$ of the \mga with population size $\mu \in \N_{\geq 2}$, crossover rate $p_c = \Theta(1)$, and mutation rate $p_m= \frac{\chi}{n}$ optimizing $\jumpk$ with $k= o(n)$.
  Let~$B$ denote the event that the entire population~$P^{(t)}$ is on the plateau and that the offspring produced this iteration is created via mutation (and no crossover).
  Assume that~$B$ has a positive probability of occurring.
  Last, let~$s$ be a species of~$P^{(t)}$.
  Then, for the transition probabilities defined in equation~\eqref{eq:transitionProbabilities} and for all $y \in [\mu]$, it holds that
  \begin{align*}
    p^{(s, t)}_{+}(y \mid A)& = \frac{y(\mu-y)}{\mu(\mu+1)}\left(1-\frac{\chi}{n}\right)^{n}+O\left(\frac{(\mu-y)^{2}}{n\mu^{2}}\right) \text{, and}\\
    p^{(s, t)}_{-}(y \mid A)& \geq \frac{y(\mu-y)}{\mu(\mu+1)}\left(1-\frac{\chi}{n}\right)^{n}.
  \end{align*}
\end{lemma}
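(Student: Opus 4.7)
The plan is to compute both transition probabilities directly by conditioning on the genotype of the mutation parent, exploiting the fact that under~$B$ the iteration consists only of mutation on a plateau-only population, so that the case analysis is short.

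For the upper bound on $p^{(s,t)}_+(y \mid B)$, I note that a \yIndividual is on the plateau, so whenever the offspring is a \yIndividual, all $\mu+1$ candidates for removal have equal fitness and the removed individual is chosen uniformly at random. Thus an increase of $Y_s$ requires both that mutation produces a \yIndividual and that a \nonyIndividual is then removed, contributing the factor $(\mu-y)/(\mu+1)$. A \yIndividual offspring arises in two ways. Either the parent is itself a \yIndividual (probability $y/\mu$) and no bit flips (probability $(1-\chi/n)^n$), which gives the leading term. Or the parent is a \nonyIndividual at some Hamming distance $2d$ from the $y$-genotype, where $d \geq 1$ since all plateau individuals have $n-k$ ones and thus pairwise even Hamming distance. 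For such a parent the mutation must flip exactly those $2d$ positions and leave the others unchanged, which occurs with probability $(\chi/n)^{2d}(1-\chi/n)^{n-2d} = O(1/n^{2d})$; since each \nonyIndividual parent has probability $1/\mu$ of being selected and at most $\mu-y$ \nonyIndividuals are present, the total contribution of this second case to creating a \yIndividual is $O((\mu-y)/(\mu n^2))$. Multiplying by $(\mu-y)/(\mu+1)$ turns this into $O((\mu-y)^2/(\mu^2 n^2))$, which I absorb into $O((\mu-y)^2/(\mu^2 n))$ to match the stated form.

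For the lower bound on $p^{(s,t)}_-(y \mid B)$, it suffices to exhibit one scenario that deterministically decreases $Y_s$ by one. I select a \nonyIndividual as parent (probability $(\mu-y)/\mu$), require mutation to flip no bit (probability $(1-\chi/n)^n$) so that the offspring is an identical copy of the parent and hence a \nonyIndividual on the plateau, and then have the tie-broken selection uniformly at random remove one of the $y$ \yIndividuals among the $\mu+1$ plateau-level candidates (probability $y/(\mu+1)$). Multiplying these three conditionally independent probabilities under~$B$ gives exactly $\frac{y(\mu-y)}{\mu(\mu+1)}(1-\chi/n)^n$.

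The computation is essentially routine; the only care I anticipate is making the upper-bound enumeration exhaustive, in particular noting that when the parent is a \yIndividual the offspring equals the parent if and only if no bit is flipped (standard bit mutation flips each bit at most once), so no additional term from $y$-parents beyond the ``no bit flip'' contribution appears.
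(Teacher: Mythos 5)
Your proof is correct and takes essentially the same route as the paper's: decompose the increase event by the species of the mutation parent (a \yIndividual parent with no bit flips gives the main term; a \nonyIndividual parent must hit specific positions, giving the error term), and lower-bound the decrease by the single scenario of copying a \nonyIndividual and removing a \yIndividual. Your only deviation is the sharper $O(1/n^{2})$ estimate for the second case (using that distinct plateau genotypes have Hamming distance at least~$2$), which you then correctly relax to the stated $O(1/n)$ form.
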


\begin{proof}
  \textbf{Increase.}
  We start by bounding $p^{(s, t)}_{+}(y)$ from above.
  In order for the number of \yIndividuals to increase, mutation needs to produce a \yIndividual, which happens in the following two ways:
  \begin{enumerate}
      \item A \yIndividual is selected and mutation does not flip any bits.
        This occurs with probability $(1-\frac{\chi}{n})^{n}$.
      \item A \nonyIndividual is selected.
        Hence, at least one specific position must be flipped, occuring with probability $O(\frac{1}{n})$.
  \end{enumerate}
  After mutation, a \nonyIndividual needs to be removed from the population in order for the number of \yIndividuals to increase.
  The probability to remove a \nonyIndividual is $\frac{\mu-y}{\mu+1}$.
  Hence, the overall probability for an increase is at most $\frac{y(\mu-y)}{\mu(\mu+1)}(1-\frac{\chi}{n})^{n}+O\left(\frac{(\mu-y)^{2}}{n\mu^{2}}\right)$.

  \textbf{Decrease.}
  We now bound~$p^{(s, t)}_{-}(y)$ from below.
  To this end, we only consider the case that (i) a \nonyIndividual is selected for mutation, (ii) that it is copied via mutation (i.e., flipping no bits), and (iii) that a \yIndividual is removed during selection.
  Event (i) occurs with probability $\frac{\mu - y}{\mu}$, event (ii) with probability $(1-\frac{\chi}{n})^{n}$, and event (iii) with probability $\frac{y}{\mu+1}$.
  Hence, the overall probability for a decrease is at least $\frac{y(\mu-y)}{\mu(\mu+1)}(1-\frac{\chi}{n})^{n}.$
\end{proof}

The next lemma bounds the expected time for a species to reach a size of at most~$\frac{\mu}{2}$, by Dang et~al.~\cite{DangFKKLOSS18}.
The authors only consider the largest species, but since the proof relies on the transition probabilities we stated so far in this section, which are not specific to the largest species, the lemma is actually more general.
\begin{lemma}[{\cite[Lemma~$6$]{DangFKKLOSS18}}]\label{reducesize}
  Let $\chi \in \Theta(1)$.
  Consider the \mga with population size $\mu = O(n)$, crossover rate $p_c = \Theta(1)$, and mutation rate $p_m= \frac{\chi}{n}$ optimizing $\jumpk$ with $k= o(n)$.
  Further, let~$s$ be a species, and let $t^* \in \N$ be an iteration such~$P^{(t^*)}$ is entirely on the plateau.
  In addition, let~$Y_s(t^*)$ be defined as above equation~\eqref{eq:transitionProbabilities}.
  Let $T = \inf\{t \in \N \mid Y_s(t^* + t) \leq \frac{\mu}{2}\}$.
  Then $\E[T \mid Y_s(t^*)] = O(\mu n + \mu^2 \log(\mu))$.
\end{lemma}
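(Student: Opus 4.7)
The plan is to view $Y_{s}$ as a random walk on $[0..\mu]$ and combine the conditional transition bounds of Lemmas~\ref{lem:rdrift}, \ref{lem:ldrift}, \ref{lem:hamdrift4}, and~\ref{lem:mutationdrift} via the law of total probability over the three mutually exclusive offspring-creation events: crossover with parents of Hamming distance at most~$2$, crossover with parents of Hamming distance at least~$4$, and pure mutation. Under the assumption that the population remains on the plateau, each of these events has probability $\Theta(1)$ (the two crossover events jointly contribute the factor $p_{c}=\Theta(1)$, and the mutation event contributes $1-p_{c}$), so the weighted sum yields unconditional one-step transition probabilities $P_{+}(y)$ and $P_{-}(y)$ whose leading terms are of order $\Theta(y(\mu-y)/\mu^{2})$ for all $y \in [\mu/2+1 .. \mu-1]$.

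I would then show that on the bulk range $y \in [\mu/2+1 .. \mu-1]$, the random walk is upward-dominated by a fair (or slightly downward-biased) random walk. Lemma~\ref{lem:mutationdrift} alone balances $P_{+}$ against $P_{-}$ in the mutation-only branch, and Lemma~\ref{lem:hamdrift4} provides a strict factor-two downward bias in the long-distance-crossover branch; the short-distance-crossover contribution (bounded via Lemmas~\ref{lem:rdrift} and~\ref{lem:ldrift}) is then shown to be at most matched by the downward bias of the other two branches so that the total combination is at worst fair on this range. At the boundary $y=\mu$, we would use $P_{+}(\mu) = 0$ together with $P_{-}(\mu) = \Omega(k/n)$ from the second estimate of Lemma~\ref{lem:rdrift}.

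With the transition bounds in place, the hitting time splits into two contributions. In the bulk, a classical gambler's-ruin/variance-based hitting-time estimate for a fair random walk with step rate $\Theta(1)$ on an interval of length $\Theta(\mu)$ gives expected hitting time $O(\mu^{2})$; the additional $\log \mu$ factor arises from iterating a high-probability tail bound over $O(\log \mu)$ phases to absorb the excursions that repeatedly return the walk toward the upper endpoint. At the boundary $y=\mu$ the walk escapes in expected $O(n/k) = O(n)$ iterations, and the expected number of returns to $\mu$ before absorption at $\mu/2$ is bounded by $O(\mu)$, again by a gambler's-ruin recurrence on the bulk walk, which yields the $O(\mu n)$ contribution. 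Summing these two bounds gives the claimed $O(\mu n + \mu^{2}\log \mu)$.

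I expect the main obstacle to be the verification of non-positive overall drift in the bulk for \emph{every} admissible $\chi \in \Theta(1)$ and $p_{c} \in \Theta(1)$: when $\chi$ is small the short-distance-crossover branch alone carries a mild upward bias at values of $y$ close to $\mu$, so one must carefully argue that the factor-two downward bias from Lemma~\ref{lem:hamdrift4} together with the perfectly balanced mutation term compensates. Once this bookkeeping is done, aggregating the transition probabilities, invoking the random-walk hitting-time bound on the bulk, and amortizing the boundary excursions are all routine.
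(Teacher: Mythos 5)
First, note that the paper does not actually prove this statement: it imports it verbatim as \cite[Lemma~6]{DangFKKLOSS18} (only arguing that the proof extends from the largest species to an arbitrary one), so the comparison is against the proof that the citation relies on. Your overall architecture is exactly that proof's architecture: combine the conditional transition bounds over the three offspring-creation events, treat $y=\mu$ as a boundary state escaped with probability $\Omega(k/n)$ per iteration and revisited $O(\mu)$ times by gambler's ruin, and bound the bulk $[\mu/2..\mu-1]$ as an at-worst-fair lazy random walk, where the $\log\mu$ factor comes from the laziness $p_{\mathrm{move}}(y)=\Theta(y(\mu-y)/\mu^2)$ near $y=\mu$ (the cleanest route here is the occupation-time formula $\sum_y \mathrm{visits}(y)/p_{\mathrm{move}}(y)=O(\mu^2\log\mu)$ rather than your phase/tail-bound argument, but that is a matter of presentation).

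The genuine gap is the step you yourself flag as "the main obstacle": the claim that the short-distance-crossover branch's upward bias for small $\chi$ can be compensated by Lemma~\ref{lem:hamdrift4} and the mutation branch. It cannot. Subtracting the bounds of Lemmas~\ref{lem:rdrift} and~\ref{lem:ldrift} gives, up to the common factor $\frac{y(\mu-y)}{2(\mu+1)\mu^2}(1-\chi/n)^n$ and $O(1/n)$ terms, a conditional drift proportional to $(\mu+y)-\bigl(\mu(1+\tfrac{\chi}{2})+\tfrac{y\chi}{2}\bigr)=y-\tfrac{(\mu+y)\chi}{2}$, which is \emph{positive} throughout $[\mu/2,\mu]$ once $\chi<4/7$ (and positive near $y=\mu$ for any $\chi<1$); moreover these bounds are essentially tight when all \nonyIndividuals are pairwise at Hamming distance~$2$, so this is not an artifact of loose estimates. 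Neither of your proposed correctives helps: Lemma~\ref{lem:hamdrift4} is only a \emph{multiplicative} statement conditional on the event~$A'$, which can have probability zero (e.g., a population with only two genotypes), so it contributes no absolute downward drift; and Lemma~\ref{lem:mutationdrift} makes the mutation branch balanced at best (its $p_+$ bound even carries an extra positive $O(1/n)$ term), not downward-biased. Hence for $\chi<1$ the combined walk can have a strictly positive conditional drift on the bulk, and the fair-random-walk hitting-time bound---and with it your whole second half---does not go through. The proof in \cite{DangFKKLOSS18} avoids this precisely by assuming a mutation rate $\chi>1$, under which $y-\tfrac{(\mu+y)\chi}{2}\le\tfrac{y-\mu}{2}\le 0$ on the whole range and the crossover branch is itself at worst fair; to complete your argument you must either restrict to $\chi\ge 1$ or supply a genuinely new estimate for the decrease probability when $\chi<1$ (the lemma as stated for all $\chi=\Theta(1)$ is not supported by the available transition bounds).
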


The following lemma shows that once a species has a size of at most~$\frac{\mu}{2}$, then, for any constant $\lambda \in (\frac{1}{2}, 1)$, its size does not reach $\lambda\mu$ with high probability in a number of iterations exponential in~$\mu$.
\begin{lemma}\label{lem:probound}
  Let $\chi \in \Theta(1)$.
  Consider the \mga with population size $\mu \in \N_{\geq 2}$, crossover rate $p_c = \Theta(1)$, and mutation rate $p_m= \frac{\chi}{n}$ optimizing $\jumpk$ with $k= o(n)$.
  Further, let~$s$ be a species, and let $t^* \in \N$ be an iteration such~$P^{(t^*)}$ is entirely on the plateau.
  Assume that~$Y_s(t^*)$ (as defined above equation~\eqref{eq:transitionProbabilities}) is at least~$\frac{\mu}{2}$.
  Let $\lambda \in (\frac{1}{2}, 1)$, and let $T = \inf\{t \in \N \mid Y_s(t + t^*) \geq \lambda\mu\}$.
  Last, let $C = \frac{(2\lambda - 1)(1 + (1 + \lambda)\chi)}{256e}p_c$.
  Then, for all $t \in \N$, it holds that $\Pr[T\leq t] \leq t^{2} \cdot \exp(-C\mu)$.
\end{lemma}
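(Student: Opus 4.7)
The plan is to apply the negative-drift theorem (Theorem~\ref{thm:drift}) to the shifted process $X_t := Y_s(t^*+t) - \mu/2$. Reaching size $\lambda\mu$ corresponds to $X_t$ reaching the threshold $b := (2\lambda-1)\mu/2$, so the $T$ of the lemma coincides with the first hitting time of $b$ by $X_t$. Assuming the natural precondition $Y_s(t^*) \leq \mu/2$ inherited from Lemma~\ref{reducesize} gives $X_0 \leq 0$. I take $a := 0$, so the drift condition only has to be verified on $y \in [\mu/2, \lambda\mu)$, and any $c$ slightly above $1$ satisfies condition~(ii) since $|X_{t+1}-X_t| \leq 1$ and $X_t$ is integer-valued; condition~(iii) is then automatic, because $X_t < 0$ forces $X_{t+1} \leq X_t + 1 \leq 0$.

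The core task is to establish a uniform drift bound
\[
  E\bigl[Y_s(t+1)-Y_s(t) \bigm| \calF_t,\, Y_s(t) = y\bigr] \leq -\varepsilon
\]
on $y \in [\mu/2, \lambda\mu)$ with $\varepsilon$ of order $p_c(1+(1+\lambda)\chi)$. Feeding this into Theorem~\ref{thm:drift} with the above $b$ and $c$ produces the exponent $b|\varepsilon|/(2c^2) = C\mu$ for the $C$ of the lemma. I would split the one-step change of $Y_s$ according to the three disjoint events governing an iteration: event~$A$ (crossover, parent Hamming distance $\leq 2$), event~$A'$ (crossover, Hamming distance $\geq 4$), and event~$B$ (no crossover). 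By Lemma~\ref{lem:mutationdrift} the event~$B$ contribution to the drift is only $O(1/n)$, absorbed in lower-order terms; by Lemma~\ref{lem:hamdrift4} the event~$A'$ contribution is non-positive on $y \geq \mu/2$; and the dominant negative term comes from event~$A$, controlled by combining the upper bound on $p_+^A$ from Lemma~\ref{lem:rdrift} with the lower bound on $p_-^A$ from Lemma~\ref{lem:ldrift}.

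The main obstacle is extracting the correct constant from the event-$A$ contribution. Direct subtraction of Lemmas~\ref{lem:rdrift} and~\ref{lem:ldrift} yields, after simplification,
\[
  p_+^A(y) - p_-^A(y) \leq \frac{y(\mu-y)(1-\chi/n)^n}{2(\mu+1)\mu^2}\Bigl[y - \tfrac{(\mu+y)\chi}{2}\Bigr] + O\Bigl(\tfrac{(\mu-y)^2}{\mu^2 n}\Bigr),
\]
whose bracket is negative on $[\mu/2, \lambda\mu]$ only when $\chi > 2\lambda/(1+\lambda)$; at smaller $\chi$ the naive bound is insufficient. To close this gap I would not use Lemma~\ref{lem:ldrift} as a black box but instead reopen its proof and sharpen its Case~2: when both non-$y$ parents are copies of the same genotype (Hamming distance $0$) the probability of producing a non-$y$ plateau offspring is $(1-\chi/n)^n$, essentially twice the Hamming-distance-$2$ lower bound currently invoked. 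Propagating this into the bookkeeping turns the factor $\mu(1+\chi/2) + y\chi/2$ appearing in Lemma~\ref{lem:ldrift} into one of order $\mu(1+(1+\lambda)\chi)$ when evaluated at $y = \lambda\mu$, so that the comparison with the $(\mu+y)$ factor from Lemma~\ref{lem:rdrift} produces a strictly positive gap of the desired size. After weighting by $\Pr[A] = \Theta(p_c)$ and the prefactor $y(\mu-y)/\mu^2 = \Theta(1)$ on the interval, this yields $\varepsilon = \Theta(p_c(1+(1+\lambda)\chi))$, and Theorem~\ref{thm:drift} then delivers $\Pr[T\leq t] \leq t^2\exp(-C\mu)$ with $C$ as stated.
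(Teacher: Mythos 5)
Your setup (the shifted process $X_t=Y_s(t^*+t)-\mu/2$, the choice of $a$, $b$, $c$, the three-event decomposition into $A$, $A'$, $B$, and the reduction to a uniform negative drift on $[\mu/2,\lambda\mu)$) matches the paper's proof, and your diagnosis of the central difficulty is correct and in fact sharper than the paper's own treatment: subtracting the lower bound of Lemma~\ref{lem:ldrift} from the upper bound of Lemma~\ref{lem:rdrift} leaves the factor $y-\chi(\mu+y)/2$, which is positive on part of $[\mu/2,\lambda\mu]$ whenever $\chi<2\lambda/(1+\lambda)$, so the event-$A$ drift is not negative by direct subtraction for small constant~$\chi$. (The paper passes this point via the inequality $C_+-C_-\le -C_+\cdot\frac{\mu(1+\chi/2)+y\chi/2}{\mu+y}$, whose right-hand side equals $-C_-$, so as written it is equivalent to $C_+\le 0$; you have located a genuine soft spot.)

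However, your proposed repair does not close the gap. First, you may not assume that the two non-$y$ parents in Case~2 of Lemma~\ref{lem:ldrift} have Hamming distance~$0$: the composition of the non-$y$ sub-population is arbitrary, and the distance-$2$ sub-case, whose success probability is only $\frac12(1+\frac{\chi}{2})(1-\frac{\chi}{n})^n$, can account for all of it. Second, even in the most favorable composition (all non-$y$ individuals identical), replacing the Case-2 contribution by $(1-\frac{\chi}{n})^n$ turns the factor $\mu(1+\frac{\chi}{2})+\frac{y\chi}{2}$ into $y(1+\chi)+2(\mu-y)=2\mu-y+y\chi$, not into $\mu(1+(1+\lambda)\chi)$: the gain sits in the constant term, not in the $\chi$-term, because the distance-$0$ sub-case receives no mutation bonus. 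Comparing $2\mu-y+y\chi$ against the factor $\mu+y$ from Lemma~\ref{lem:rdrift} at $y=\lambda\mu$ still requires $\chi>2-1/\lambda$ (e.g.\ $\chi>2/3$ for $\lambda=3/4$), so for smaller constant $\chi$ your bound on the event-$A$ drift remains positive and Theorem~\ref{thm:drift} cannot be applied. The decisive quantitative step --- a uniform bound $\eps=-\Theta\left(p_c(1+(1+\lambda)\chi)\right)$ valid for every constant $\chi>0$ --- is therefore still missing from your argument.
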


\begin{proof}
  We aim to apply the negative-drift theorem (Theorem~\ref{thm:drift}) with $(\calF_t)_{t \in \N}$ being the canonical filtration of $(Y_s(t + t^*))_{t \in \N}$.
  We consider the process $(X_t)_{t \in \N}$, which is defined such that, for all $t \in \N$, it holds that $X_t = Y_s(t + t^*)-\frac{\mu}{2}$.
  Furthermore, we choose the variables of Theorem~\ref{thm:drift} by defining that $b=(\lambda-\frac{1}{2})\mu$, $c=1$, $\eps = -\frac{1 + (1+\lambda)\chi}{64e}p_c$ being a constant, and $a=0$.

  By the definition of~$X$, it holds that $T = \inf\{t \in \N \mid X_t \geq (\lambda - \frac{1}{2})\mu = b\mu\}$.

  We make sure that the different conditions of Theorem~\ref{thm:drift} are all met.
  For the sake of readability, we omit the indicator functions in this proof.

  \textbf{Condition~(i).}
  Let $t \in \N$, and let $y = Y_s(t + t^*)$.
  By the definition of~$Y_s$ and of~$X$, it that
  \begin{equation}\label{main:2}
    \E[X_{t+1}-X_t  \mid {\calF_t} ] = p^{(s, t + t^*)}_{+}(y \mid {\calF_t})-p^{(s, t + t^*)}_{-}(y \mid {\calF_t}).
  \end{equation}

  We aim to bound equation~\eqref{main:2} from above by using the bounds of Lemmas~\ref{lem:rdrift} and~\ref{lem:ldrift}.
  To this end, we define the constants
  \begin{align*}
      C_{+}&= \frac{(\mu-y)y(\mu+y)}{2(\mu+1)\mu^2} \textrm{ and}\\
      C_{-}&= \frac{y(\mu-y)\left( \mu\left(1 + \frac{1}{2}\chi\right) + \frac{1}{2}y\chi \right) } {2(\mu +1) \mu^2}.
  \end{align*}

  We now make a case distinction with respect to the different bounds on the transition probabilities from the different settings of Lemmas~\ref{lem:rdrift}, \ref{lem:ldrift}, \ref{lem:hamdrift4}, and~\ref{lem:mutationdrift}.

  \textbf{Case~1.}
  We consider the setting of Lemmas~\ref{lem:rdrift} and~\ref{lem:ldrift}.
  Let~$A$ denote the respective event from those lemmas.
  Since there is always the chance to pick the same individual twice for crossover, $\Pr[A] > 0$.
  The respective bounds from Lemmas~\ref{lem:rdrift} and~\ref{lem:ldrift} yield that, noting that $y < \mu$, as we assume that $y \in [\frac{\mu}{2}, \lambda\mu]$ but omit the indicator functions,
  \begin{align*}
    &p^{(s, t + t^*)}_{+}(y \mid A)-p^{(s, t + t^*)}_{-}(y \mid A)\\
    &\leq \left( C_{+}\left(1-\frac{\chi}{n}\right)^n-C_{-}\left(1-\frac{\chi}{n}\right)^n + O\left(\left(\frac{\mu-y}{\mu}\right)^2 \frac{1}{n}\right) \right)   \\
    & \leq \left( -C_{+}\left(1-\frac{\chi}{n}\right)^n\frac{\mu\left(1 + \frac{1}{2}\chi\right) + \frac{1}{2}y\chi}{\mu+y}+o(1) \right) \\
    &\leq \left(-\frac{(\mu-y)y\left(\mu\left(1 + \frac{1}{2}\chi\right) + \frac{1}{2}y\chi\right)}{2(\mu+1)\mu^{2}}\left( 1-\frac{\chi}{n} \right)^n + o(1)\right).
  \end{align*}
  Using that $y \in [\frac{\mu}{2}, \lambda\mu]$ and that $\left(1-\frac{\chi}{n}\right)^n \geq \frac{1}{e}\left(1-\frac{\chi}{n}\right)$, we get
  \begin{equation}\label{main:3}
    \E[X_{t+1}-X_t \mid \calF_t; A] \leq -\frac{2 + (1+\lambda)\chi}{16e} + o(1).
  \end{equation}

  \textbf{Case~2.}
  We consider the setting of Lemma~\ref{lem:hamdrift4}.
  Let~$A'$ denote the respective event from this lemma.
  Assume that $\Pr[A'] > 0$.
  By Lemma~\ref{lem:hamdrift4}, we get
  \begin{align}
    \label{main:4}
    p^{(s, t + t^*)}_{+}(y \mid A')-p^{(s, t + t^*)}_{-}(y \mid A') \leq -p^{(s, t + t^*)}_{+}(y \mid A') \leq 0.
  \end{align}

  \textbf{Case~3.}
  We consider the setting of Lemma~\ref{lem:mutationdrift}.
  Let~$B$ denote the respective event from this lemma.
  Assume that $\Pr[B] > 0$.
  By Lemma~\ref{lem:mutationdrift}, we get
  \begin{equation}\label{main:7}
    p^{(s, t + t^*)}_{+}(y \mid B)-p^{(s, t + t^*)}_{-}(y \mid B) \leq O\left(\frac{(\mu-y)^{2}}{n\mu^{2}}\right) = O\left(\frac{1}{n}\right).
  \end{equation}

  \textbf{Combining the cases.}
  Since the events~$A$, $A'$, and~$B$ partition the possible events of how the population changes in a single iteration, we determine $p^{(s, t + t^*)}_{+}(y)-p^{(s, t + t^*)}_{-}(y)$ via the law of total probability.
  To this end, we first discuss whether the events~$A$, $A'$, and~$B$ have positive probability for the relevant setting of $y \in [\frac{\mu}{2}, \lambda\mu]$.

  Since $p_c > 0$, crossover has a positive probability of being used in each iteration.
  Conditional on crossover being used, since $y \geq \frac{\mu}{2}$, if crossover picks a \yIndividual twice, occurring with probability $(\frac{y}{\mu})^2 \geq \frac{1}{4}$, then this is sufficient for~$A$ to occur.
  Hence, $\Pr[A] \geq \frac{1}{4}$.

  The probabilities of~$A'$ and~$B$ may be~$0$.
  If $\Pr[B] = 0$, then $\Pr[A \cup A'] = 1$.
  And if $\Pr[A'] = 0$, then $\Pr[A \mid \overline{B}] = 1$.
  Since, by Case~2, the drift in the event of~$A'$ (equation~\eqref{main:4}) is negative and since event~$A$ covers the case that crossover is applied entirely otherwise, we ignore~$A'$ in the following.
  Further, since, by Case~3, the drift in the event of~$B$ (equation~\eqref{main:7}) is positive, we pessimistically estimate its probability by~$1$.

  Together, these considerations yield, combining equations~\eqref{main:3}, \eqref{main:4}, and~\eqref{main:7}, as well as equation~\eqref{main:2} that
  \begin{align*}
    &\E[X_{t+1}-X_t  \mid {\calF_t} ] \leq \left(-\frac{2 + (1+\lambda)\chi}{16e} + o(1)\right) \Pr[A \mid \calF_t] + O\left(\frac{1}{n}\right)\\
    &\leq -\frac{1 + (1+\lambda)\chi}{16e} p_c \cdot \frac{1}{4}
    = -\frac{1 + (1+\lambda)\chi}{64e} p_c = \eps.
  \end{align*}

  \textbf{Condition~(ii).}
  This condition is satisfied for $c = 1$, as~$y$ changes by at most~$1$ per iteration.

  \textbf{Condition~(iii).}
  If $X_t < 0 = a$, then, by the definition of~$X$, it follows that $y < \frac{\mu}{2}$.
  Since~$y$ changes by at most~$1$ in a single iteration, it follows that $Y_s(t + 1 + t^*) \leq \frac{\mu}{2}$, which is equivalent to $X_{t + 1} \leq 0$.

  \textbf{Applying the drift theorem.}
  Since all conditions of Theorem~\ref{thm:drift} are met, we apply it and get for all $t \in \N$ that
  \[
    \Pr[T\leq t] \leq t^{2} \exp\left(-\frac{(2\lambda-1)|\eps|\mu}{4}\right).
  \]
  Noting that $C = \frac{(2\lambda - 1)(1 + (1 + \lambda)\chi)}{256e}p_c$ concludes the proof.
\end{proof}

We have all the tools we require to prove our main result.
\begin{proof}[Proof of Theorem~\ref{thm:runtime}]
  We split the runtime into two parts, mainly focusing on the second part.
  The first part considers the expected number of fitness function evaluations until the entire population of the \mga is on the plateau or until the optimum is found.
  The second part assumes that the entire population of the \mga is on the plateau, and it considers the expected number of fitness function evaluations (which is equivalent to iterations here, since we evaluate a single new individual each iteration) until the optimum of $\jumpk$ is found.
  By the linearity of expectation, the overall expected runtime of the \mga is at most the sum of the expected times of these two parts, since the second part ignores whether the optimum is found in the first part.

  \textbf{First part.}
  Theorem~\ref{thm:timeplateau} bounds the expected time for the first part by $O(n\sqrt{k}(\mu \log(\mu)+ \log(n)))$.
  Hence, for the remainder ot this proof, we focus on the second part.

  \textbf{Second part.}
  Since we count iterations from the first point in time that the entire population is on the plateau, we assume that this time point is~$0$, simplifying the forthcoming calculations.
  Let~$T$ be the first point in time (starting from the newly defined~$0$) such that the \mga creates the optimum.

  We split~$T$ into \emph{phases} of deterministic length~$\ell$ (which we specify later).
  If the \mga does not find the optimum during a single phase, we go into the next phase, repeating our arguments.
  To this end, let~$R$ denote the number of phases that pass until the \mga creates the optimum for the first time.
  By definition, it then holds that $T \leq R\ell$.
  Consequently, $\E[T] \leq \E[R] \ell$.

  Since the entire population is on the plateau until the optimum is found for the first time, each phase (except for the last one) has the same, independent probability of finding the optimum.
  Let~$q$ be this probability, noting that $q > 0$.
  Since the probability of finding the optimum in the last phase is~$1$, the probability~$q$ is a lower bound for this phase as well.
  Thus,~$R$ is stochastically dominated by a geometric random variable with success probability~$q$, and it follows that $E[R] \leq \frac{1}{q}$.
  Together with the arguments from the previous paragraph, this yields overall $E[T] \leq \frac{\ell}{q}$.

  \textbf{Bounding the success probability.}
  For the remaining part, we aim to bound~$q$.
  Let $c_1 \in \N$ be a sufficiently large constant from the big-O result in Lemma~\ref{reducesize}.
  Further, let $c_2 \in \N$ be a sufficiently small constant from the big-O result in Lemma~\ref{lem:mutime}, for the case $d = 2$.
  In addition, let $t = \min(\exp(\frac{C\mu - 1}{2}),\frac{1}{c_2}n^{k-1})$.
  Last, let $\ell = 2c_1(\mu n + \mu^2 \log(\mu)) + t$.
  We consider a single phase.

  \textbf{Achieving a sufficient diversity.}
  During a phase, we wait for the first iteration~$L$ (relative to the start of the phase) until the largest species has a size~$y$ of at most~$\frac{\mu}{2}$.
  By Lemma~\ref{reducesize} and by definition of~$c_1$, the \emph{expected} time for this is at most $c_1(\mu n + \mu^2 \log(\mu))$.
  Since~$L$ is a non-negative random variable, it follows by Markov's inequality that $\Pr[L \geq 2\E[L]] \leq \frac{1}{2}$.
  We assume in the following that $L < 2\E[L] = 2c_1(\mu n + \mu^2 \log(\mu))$.
  Note that, it holds that $2c_1(\mu n + \mu^2 \log(\mu)) \leq \ell$.

  \textbf{Lasting diversity.}
  Given that $y \leq \frac{\mu}{2}$ during the phase, by Lemma~\ref{lem:probound}, choosing $\lambda = \frac{3}{4}$, it follows that $y < \lambda\mu$ for~$t$ iterations (as defined above) with probability at least $1 - \frac{1}{e}$.
  We assume that this event occurs.
  Note that by the definition of~$\ell$, a phase is long enough to account for this length.

  \textbf{Success probability in a single iteration.}
  Since $y < \lambda\mu$, there are at least two different species in the population.
  Since the population is on the plateau, different species have a Hamming distance of at least~$2$.
  Thus, during each of the at least~$t$ iterations, the probability that the \mga performs crossover on two individuals of Hamming distance at least~$2$ is given by the probability that the second parent belongs to a different species than the first.
  Since $y < \lambda\mu$, this probability is at least $\frac{1 - \lambda}{\mu} = \frac{1}{4}$.
  Conditional on choosing such parents, Lemma~\ref{lem:mutime} states that the optimum is produced with a probability of $\Omega(n^{-k + 1})$.
  Overall, the probability that, in a single iteration, the optimum is created is at least $p_c \cdot \frac{1}{4} \cdot \Omega(n^{-k + 1}) = \Omega(n^{-k + 1})$.
  Let~$q'$ denote this probability.

  \textbf{Success probability during lasting diversity.}
  By the arguments discussed in the previous two paragraphs, the \mga has~$t$ iterations during which the success probability of each is at least~$q'$.
  The probability that the optimum is created during any of these iterations is at least $1 - (1 - q')^t \geq 1 - \exp(-q't)$.
  By the definition of~$t$, it holds that $q't \leq 1$.
  Thus, since it holds for all $x \in [0, 1]$ that $\exp(-x) \leq 1 - \frac{x}{2}$, it follows that $1 - \exp(-q't) \geq \frac{q't}{2}$.

  \textbf{Concluding a phase.}
  Assuming that the different events discussed above occur during a phase, recalling the definition of~$q$, it follows that $q \geq \frac{1}{2} \cdot (1 - \frac{1}{e}) \cdot \frac{q't}{2} = \Omega(q't)$.

  \textbf{Concluding the second part.}
  With our bound for~$q$, we conclude $\E[T] \leq \frac{\ell}{q} = O\bigl(\frac{\ell}{q't}\bigr) = O\bigl(\frac{\mu n + \mu^2 \log(\mu)}{q't} + \frac{1}{q'}\bigr)$.
\end{proof}

\begin{figure*}
  \centering
  \begin{subfigure}{0.49 \textwidth}
    \includegraphics[width = \textwidth]{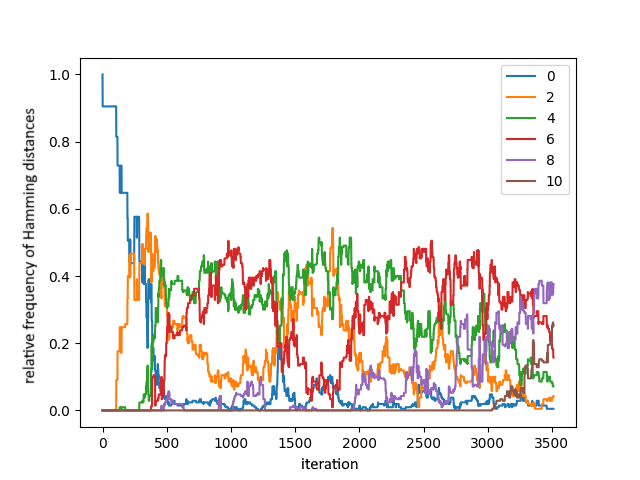}
    \caption{A single run of the \mga on $\jumpk$ with the parameters $n = 100$, $k = 5$, $\mu = 20$, $p_c = 1$, and $p_m = \frac{1}{n}$.}
    \label{fig:pairwiseHammingDistances:n100}
  \end{subfigure}
  \hfil
  \begin{subfigure}{0.49 \textwidth}
    \includegraphics[width = \textwidth]{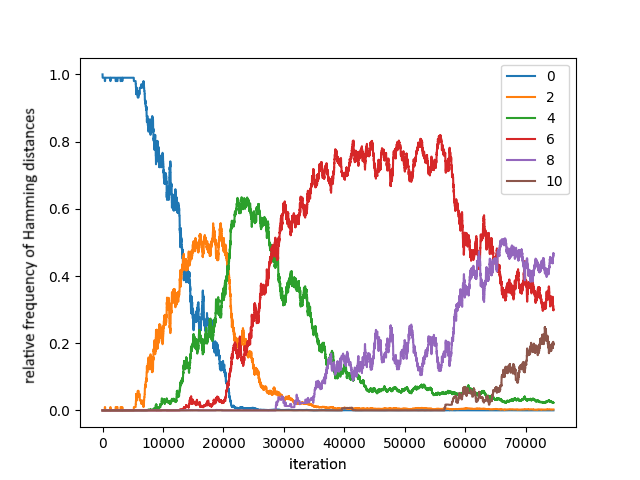}
    \caption{A single run of the \mga on $\jumpk$ with the parameters $n = 1000$, $k = 5$, $\mu = 200$, $p_c = 1$, and $p_m = \frac{1}{n}$.}
    \label{fig:pairwiseHammingDistances:n1000}
  \end{subfigure}
  \caption{The relative frequencies of the different Hamming distances of the population of the \mga (Algorithm~\ref{alg:gea}) on the plateau of $\jumpk$ (equation~\eqref{eq:jumpDefinition}), for the respective parameter settings.
    Each plot depicts a single run of the \mga on $\jumpk$ starting with the entire population on the plateau.
    The initial population is chosen such that it consists of a single species (i.e.,~$\mu$ copies of the same individual), chosen uniformly at random among all individuals with exactly~$k$ $0$s and $n - k$ $1$s.
    The run is stopped once the algorithm creates the global optimum.
    The $x$-axis denotes the number of iterations of the algorithm.
    The different colors refer to the different Hamming distances in the population.
    The $y$-axis depicts the relative frequency of each such Hamming distance among all $\binom{\mu}{2}$ pairs of individuals.
    Since all individuals are on the plateau, the Hamming distances are always even numbers.
    The maximum number is twice the gap size, i.e.,~$2k$.
    Please refer to Section~\ref{sec:experimentsDiversity} for more information.
  }
  \label{fig:pairwiseHammingDistances}
\end{figure*}

\section{Experimental Evaluation}
\label{sec:experiments}

Our runtime analysis in Section~\ref{sec:results}, especially Lemma~\ref{lem:probound}, shows that once the entire population of the \mga is on the plateau, there are phases of length exponential in~$\mu$ during which the largest species (that is, identical individuals) is at most~$\frac{3}{4}\mu$.
During this time, it is possible that various different species get produced.
The larger the Hamming distance of two species, the easier it is to find the optimum of $\jumpk$ (Lemma~\ref{lem:mutime}), potentially improving the runtime.
In fact, experiments by Dang et~al.~\cite[Fig.~$6$]{DangFKKLOSS18} suggest that the expected runtime of the \mga on $\jumpk$ is rather in the order of $n\ln(n) + 4^k$ for certain mutation rates.

Since the key to improving our result (Theorem~\ref{thm:runtime}) is to better understand how diverse the population on the plateau really is during phases of lasting diversity, we investigate these population dynamics empirically in this section.
As Dang et~al.~\cite{DangFKKLOSS18} present an extensive empirical analysis of the runtime of different algorithm variants on $\jumpk$, we focus on a quantity they did not cover, namely, the relative frequencies of all pairwise Hamming distances in the population.
A higher ratio of a larger Hamming distance implies a larger probability to create the optimum.
Thus, these numbers provide strong insights into how well the population is spread out in order to create the optimum.

\subsection{Frequencies of Pairwise Hamming Distances}
\label{sec:experimentsDiversity}
We analyze and discuss how the relative frequencies of pairwise Hamming distances in the population of the \mga on the plateau of $\jumpk$ evolve over the algorithm's iterations.
Figure~\ref{fig:pairwiseHammingDistances} shows our results and explains our experimental setup.
We see both in Figures~\ref{fig:pairwiseHammingDistances:n100} and~\ref{fig:pairwiseHammingDistances:n1000} that the different possible Hamming distances (from~$0$ to~$10$, in steps of~$2$) emerge in increasing order of thir value (that is, first distance~$0$, then~$2$, and so on).
This trend is qualitatively the same for both figures, with larger fluctuations for the smaller dimension size (Figures~\ref{fig:pairwiseHammingDistances:n100}).

Initially, by the setup of the experiment, all individuals have a Hamming distance of~$0$ to each other.
However, after a short time (relative to the entire runtime), the frequency for distance~$0$ decreases rapidly and the one for distance~$2$ grows rapidly.
From this point on, the frequency for distance~$0$ remains very low for the rest of the run.
That is, the diversity in the population is very high (and remains high).
This supports the findings by Dang et~al.~\cite[Fig.~$2$]{DangFKKLOSS18}, whose experiments also show that the diversity in the population remains very high during a run.

A similar behavior occurs for other neighboring pairs of distances, such as~$2$ and~$4$, although this effect is far more pronounced in Figure~\ref{fig:pairwiseHammingDistances:n1000}.
After the frequency of a distance $d \in \{2, 4, 6\}$ grows, the frequency of distance $d + 2$ also grows until it overtakes the value of~$d$.
From this point on, the frequency of~$d$ typically remains below that of $d + 2$, going so far that~$d$ is almost not present anymore after some time (Figure~\ref{fig:pairwiseHammingDistances:n1000}).
This trend seems to also emerge for distances~$8$ and~$10$, although their frequencies do not meet, since the optimum is found before.
These dynamics suggest that the population does not only diversify in the number of species over time but also increases in pairwise distances to each other.

Last, we see that the run ends in both cases once there is a sufficiently high fraction of individuals with maximum Hamming distance in the population.
This conforms with the hypothesis that the actual expected runtime is in the order of $n\ln(n) + 4^k$.

\section{Conclusion}

In this work, we detected and mathematically proved that the diversity occurring when the \mga optimizes \jump functions persists much longer than known before, namely for a time exponential in the population size $\mu$ instead of quadratic. This result allowed us to prove superior runtime guarantees, in particular, for small population sizes.

Our experiments support our finding that once the population is not anymore dominated by a single genotype, this diversity lasts for a long time. However, our experiments also show that the diversity produced by the \mga is even stronger than what our proofs show. Not only does no genotype class dominate, but also the typical Hamming distance between individuals grows. If such an effect could be proven, this would immediately lead to much stronger runtime guarantees. This is clearly the most interesting, possibly not very easy, continuation of this work.

\bibliographystyle{plain}
\bibliography{ich_master,alles_ea_master,rest}

}

\end{document}